\newif\ifarxiv
  \patchcmd{\@maketitle}{\titlefont\huge}{\titlefont\small}{}{}
\bfseries\color{blue},
\pgfplotsset{compat=1.16}
\pgfplotsset{
  compat                 = 1.16,  
  filter discard warning = false, 
  %
  discard if not/.style 2 args={
    x filter/.append code={
      \edef\tempa{\thisrow{#1}}
      \edef\tempb{#2}
      \ifx\tempa%
        \tempb%
      \else%
        
      \fi
    }
  },
}
\pgfplotsset{%
  every non boxed x axis/.append style={x axis line style=-},
  every non boxed y axis/.append style={y axis line style=-}
}
\definecolor{538blue}   {RGB}{0,143,213}
\definecolor{538red}    {RGB}{252,79,48}
\definecolor{538yellow} {RGB}{229,174,56}
\definecolor{538green}  {RGB}{109,144,79}
\definecolor{538gray}   {RGB}{139,139,139}
\definecolor{538purple} {RGB}{129,15,124}
\newtheorem{theorem}   {Theorem}
\DeclareMathOperator{\CAT}        {CAT}
\DeclareMathOperator{\degree}     {deg}
\DeclareMathOperator{\distance}   {d}
\DeclareMathOperator{\kernel}     {k}
\DeclareMathOperator{\MMD}        {MMD}
\newcommand{\edges}    {\ensuremath{E}\xspace}                            
\newcommand{\graph}    {\ensuremath{\mathrm{G}}\xspace}                   
\newcommand{\graphs}   {\ensuremath{\mathcal{G}}\xspace}                  
\newcommand{\laplacian}{\ensuremath{\mathcal{L}}\xspace}                  
\newcommand{\metric}[1]{\ensuremath{\distance_{\mathrm{#1}}}}             
\newcommand{\model}    {\ensuremath{\mathcal{M}}\xspace}                  
\newcommand{\reals}    {\ensuremath{\mathds{R}}\xspace}                   
\newcommand{\vertices} {\ensuremath{V}\xspace}                            
  \title{\LARGE Evaluation Metrics for Graph Generative Models: Problems, Pitfalls, and Practical Solutions}
  \date{}
  \author[1, 2, $\ast$]{Leslie O'Bray}
  \author[1, 2, $\ast$]{Max Horn}
  \author[1, 2, 3, $\dagger$]{Bastian~Rieck}
  \author[1, 2, $\dagger$]{Karsten Borgwardt}
  \affil[1]{Department of Biosystems Science and Engineering, ETH
  Zurich, 4058 Basel, Switzerland}
  \affil[2]{SIB Swiss Institute of Bioinformatics, Switzerland}
  \affil[3]{B.R.\ is now with the Institute of AI for Health, Helmholtz Zentrum München, Neuherberg, Germany}
  \affil[$\ast$]{These authors contributed equally.}
  \affil[$\dagger$]{These authors jointly supervised this work.}
  \title{Evaluation Metrics for Graph Generative Models:  Problems, Pitfalls, and Practical \\ Solutions}
  \author{Leslie O'Bray$^{1,2,\dagger}$, Max Horn$^{1,2,\dagger}$, Bastian Rieck$^{1,2,3,4,*}$ and Karsten Borgwardt$^{1,2,*}$ \\
  $^1$Department of Biosystems Science and Engineering, ETH Zürich, Switzerland \\
  $^2$SIB Swiss Institute of Bioinformatics, Switzerland \\
  $^3$Institute of AI for Health, Helmholtz Munich, Germany\\
  $^4$Technical University of Munich, Germany\\
  $^{\dagger}$ These authors contributed equally. $^*$ These authors jointly supervised this work. \\
  }
\colorlet{Changes@Color}{cyan!60!black}
\begin{document}

\maketitle

\begin{abstract}
Graph generative models are a highly active branch of machine learning. Given the steady development of new models of ever-increasing complexity, it is necessary to provide a principled way to \emph{evaluate} and \emph{compare} them. In this paper, we enumerate the desirable criteria for such a comparison metric and provide an overview of the status quo of graph generative model comparison in use today, which predominantly relies on the maximum mean discrepancy (MMD). We perform a systematic evaluation of MMD in the context of graph generative model comparison, highlighting some of the challenges and
 pitfalls researchers inadvertently may encounter. After conducting a thorough analysis of the behaviour of MMD on synthetically-generated perturbed graphs as well as on recently-proposed graph generative models, we are able to provide a suitable procedure to mitigate these challenges and pitfalls. We aggregate our findings into a list of practical recommendations for researchers to use when evaluating graph generative models.
\end{abstract}

\section{Introduction}

Graph generative models have become an active research branch, making it
possible to generalise structural patterns inherent to graphs from
certain domains---such as chemoinformatics---and actively synthesise
\emph{new} graphs~\citep{Liao19}.
Next to the development of improved models, their \emph{evaluation} is
crucial. This is a well-studied issue in other domains, leading to
metrics such as the `Fr{\'e}chet Inception Distance'~\citep{Heusel17}
for comparing image-based generative models. Graphs, however, pose their
own challenges, foremost among them being that an evaluation based on
visualisations, i.e.\ on \emph{perceived} differences, is often not
possible. In addition, virtually all relevant structural properties of
graphs exhibit spatial invariances, e.g., connected components and cycles are invariant with respect to rotations---that
have to be taken into account by a comparison metric.

While the community has largely gravitated towards a single comparison metric, the maximum mean discrepancy (MMD) \added{\citep{You18, Liao19, Niu2020, Goyal2020, Dai2020, Zhang2021, Chen2021, Mi2021, podda2021graphgenredux}}, neither its expressive power nor its other properties have been systematically investigated in the context of graph generative model comparison. The goal of this paper is to provide such an investigation, starting from first principles by describing the desired properties of such a comparison metric, providing an overview of what is done in practice today, and systemically assessing MMD's behaviour using recent graph generative models. We highlight some of the caveats and shortcomings of the existing status quo, and provide researchers with practical recommendations to address these issues. \added{We note here that our investigations focus on assessing the structural similarity of graphs, which is a necessary first step before one can jointly assess graphs based on structural and attribute similarity.}
This paper purposefully refrains from developing its \emph{own} graph
generative model to avoid any bias in the comparison.

\newcommand\gauss[2]{1/(#2*sqrt(2*pi))*exp(-((x-#1)^2)/(2*#2^2))} 
\begin{figure}
  \centering
  \tikzexternaldisable
  \newsavebox\tikzrepresentation
  \savebox\tikzrepresentation{%
    \pgfmathsetseed{42}
    \begin{tikzpicture}
      \begin{axis}[
        axis equal,
        axis lines* = middle,
        xtick       = \empty,
        ytick       = \empty,
        scale       = 0.5,
        xmin        = -6,
        ymin        = -6,
        ymax        =  6,
        xmax        =  6,
      ]
        \addplot[%
          only marks,
          samples   = 50,
          mark      = *,
          mark size = 1.0pt,
        ] {25 * rand};
        \node at (-5, 5) {$\reals^d$};
      \end{axis}
    \end{tikzpicture}
  }
  \newsavebox\mmdbox
  \savebox\mmdbox{%
    \begin{tikzpicture}
        \begin{axis}[
            every axis plot post/.append style={
            samples=50,smooth
                    }, 
            height=3cm,
            width=5cm,
            axis x line*=bottom, 
            axis y line*=left, 
            enlargelimits=upper,
            yticklabels={,,},
            xticklabels={,,}
                ] 
            \addplot [fill=purple!40!gray, draw=purple!40!gray, opacity=0.5]{\gauss{0}{1.5}};
            \addplot[fill=cyan!40!gray,draw=cyan!40!gray, opacity=0.5] {\gauss{2}{0.75}};
            \end{axis}
        \end{tikzpicture}
        }
  \newsavebox\tikzgraphs
  \savebox\tikzgraphs{%
    \begin{tikzpicture}
      \tikzset{
        every node/.style = {
          shape        = circle,
          minimum size = 3pt,
          inner sep    = 0pt,
          fill         = black,
          draw         = black,
          font         = \small,
        },
      }
      \begin{scope}[scale=0.5]
        \node (N0) at (-0.58, 0.25) {};
        \node (N1) at (-0.17, 0.04) {};
        \node (N2) at (0.23, -0.21) {};
        \node (N3) at (0.61, 0.08) {};
        \node (N4) at (-0.68, 0.69) {};
        \node (N5) at (-1.00, 0.07) {};
        \node (N6) at (0.05, -0.65) {};
        \node (N7) at (0.53, -0.55) {};
        \node (N8) at (1.00, 0.28) {};
        \draw (N0) -- (N1);
        \draw (N0) -- (N4);
        \draw (N0) -- (N5);
        \draw (N1) -- (N2);
        \draw (N2) -- (N3);
        \draw (N2) -- (N6);
        \draw (N2) -- (N7);
        \draw (N3) -- (N8);
      \end{scope}

      \begin{scope}[scale=0.25, yshift=-3.5cm]
        \node (N0) at (-0.00, 0.00) {};
        \node (N1) at (1.00, 1.00) {};
        \node (N2) at (-1.00, 1.00) {};
        \node (N3) at (-1.00, -1.00) {};
        \node (N4) at (1.00, -1.00) {};
        \draw (N0) -- (N1);
        \draw (N0) -- (N2);
        \draw (N0) -- (N3);
        \draw (N0) -- (N4);
      \end{scope}

      \begin{scope}[scale=0.50, yshift=-4.0cm]
        \node (N0) at (0.05, -0.23) {};
        \node (N1) at (-0.08, 0.40) {};
        \node (N2) at (-0.21, 1.00) {};
        \node (N3) at (-0.41, -0.70) {};
        \node (N4) at (0.66, -0.47) {};
        \draw (N0) -- (N1);
        \draw (N0) -- (N3);
        \draw (N0) -- (N4);
        \draw (N1) -- (N2);
      \end{scope}
    \end{tikzpicture}
  }
  \newsavebox\tikzhist
  \savebox\tikzhist{%
    \pgfmathsetseed{42}
    \begin{tikzpicture}
      \begin{axis}[
        axis x line = none,
        axis y line = none,
        xtick       = \empty,
        ytick       = \empty,
        scale       = 0.25,
        height      =  5.0cm,
        width       = 10.0cm,
      ]
        \addplot[%
          hist,
          hist/bins = 10,
          samples   = 500,
          draw      = white,
          fill      = black,
          ] {rnd};
      \end{axis}
    \end{tikzpicture}

  }
  \tikzset{%
    block/.style = {%
      draw,
      align          = center,
      draw           = black,
      minimum height = 3.50cm,
      text width     = 2.50cm,
      inner sep      = 5pt,
      shape          = rectangle,
      font           = \scriptsize,
    },
    line/.style = {%
      >=stealth',
      draw,
      ->,
    }%
  }
  \begin{tikzpicture}[%
    start chain   = going right,
    node distance =5.0mm,
  ]
    \node[block, on chain, text width = 1.25cm, label = below:{\small Graphs}] (Graphs) {
      \usebox\tikzgraphs
    };

    \node[%
      block,
      on chain,
      align          = left,
      right          = of Graphs.south east,
      anchor         = south west,
      label          = below:{\small Descriptor functions}
    ] (Descriptor functions) {%
      Clustering coefficient\\[0.25cm]
      Degree distribution\\[0.25cm]
      Laplacian spectrum
      \begin{center}
        \usebox\tikzhist
      \end{center}
    };

    \node[%
      block,
      on chain,
      text width     = 3.50cm,
      align          = left,
      right          = of Descriptor functions.south east,
      anchor         = south west,
      label          = below:{\small Representations}
    ] (Representations) {%
      \usebox\tikzrepresentation
    };

    \node[%
      block,
      on chain,
      text width     = 3.75cm,
      align          = left,
      right          = of Representations.south east,
      anchor         = south west,
      label          = below:{\small Evaluator function}
    ] (MMD) {%
    
    \begin{center}
    \usebox\mmdbox
    \\
    MMD
    \end{center}
    };

    \draw[line] (Graphs) -- (Descriptor functions);
    \draw[line] (Descriptor functions) -- (Representations);
    \draw[line, <-] (MMD.west) -- (MMD.west -| Representations.east);
  \end{tikzpicture}
  \caption{%
    An overview of the workflow used to evaluate graph generative
    models, \added{as is used, e.g., in \citet{You18, Liao19, Niu2020}}: given a distribution of graphs, a set of descriptor
    functions is employed to map each graph to a high-dimensional
    representation in~$\reals^d$. These representations are then
    compared~(with a reference distribution or with each other) using an 
    evaluator function called the maximum mean
    discrepancy~(MMD). In principle, MMD does not require the vectorial
    representation, but we find that this is the predominant use in the
    context of graph generative model evaluation.
  }
  \label{fig:Overview}
\end{figure}

\section{Comparing graph distributions}\label{sec:Desiderata}

In the following, we will deal with undirected graphs. We denote
such a graph as~$\graph = (\vertices, \edges)$ with vertices~$\vertices$
and edges~$\edges$. We treat graph generative models as black-box models, each of
which results in a set\footnote{%
  Formally, the output can also be a multiset, because graphs are
  allowed to be duplicated.
} of graphs~$\graphs$. The original empirical distribution of graphs is denoted as
$\graphs^\ast$. Given models $\{\model_1, \model_2, \dots\}$, with generated
sets of graphs $\{\graphs_1, \graphs_2, \dots\}$, the goal
of generative model evaluation is to assess which model is a better fit,
i.e.\ which distribution is closer to
$\graphs^\ast$.
This requires the use of a (pseudo-)metric
$\metric{}\mleft(\cdot, \cdot\mright)$ to assess the dissimilarity between
$\graphs^\ast$ and generated graphs.
We argue that the \emph{desiderata} of any such comparison metric are as follows:
\begin{enumerate}[leftmargin = 0.5em]
  \item \textbf{Expressivity}: if $\graphs$ and $\graphs'$ do not arise
    from the same distribution, a suitable metric should be able to
    detect this.
    Specifically, $\metric{}\mleft(\graphs, \graphs'\mright)$ should be
    monotonically increasing as $\graphs$ and $\graphs'$ become increasingly dissimilar.
    
  \item \textbf{Robustness}: if a distribution~$\graphs$ is subject to
    a perturbation, a suitable metric should be robust to small
    perturbations. Changes in the metric should be ideally upper-bounded
    by a function of the amplitude of the perturbation. Robust metrics are preferable
    because of the inherent stochasticity of training generative models.

  \item \textbf{Efficiency}: model comparison metrics should
    be reasonably fast to calculate; even though model evaluation is
    a \emph{post hoc} analysis, a metric should scale well
    with an increasing number of graphs and an increasing size of said
    graphs.
\end{enumerate}
While there are many ways to compare two distributions, ranging from
statistical divergence measures to proper metrics in the
mathematical sense, the comparison of
distributions of graphs is exacerbated by the fact that individual
graphs typically differ in their cardinalities and are only described up
to permutation. With distances such as the graph edit distance being
NP-hard in general~\citep{Zheng09}, which precludes using them as an
\emph{efficient} metric, a potential alternative is provided by using
descriptor functions.
A descriptor function~$f$ maps a graph~$\graph$
to an auxiliary representation in some space~$\mathcal{Z}$. The problem
of comparing a generated distribution $\graphs = \{\graph_1, \dots, \graph_n\}$
to the original distribution $\graphs^\ast = \{\graph_1^\ast, \dots,
\graph_m^\ast\}$ thus boils down to comparing the \emph{images}
$f\mleft(\graphs\mright) := \{f\mleft(\graph_1\mright), \dots,
f\mleft(\graph_n\mright)\} \subseteq \mathcal{Z}$ and
$f\mleft(\graphs^\ast\mright) \subseteq \mathcal{Z}$ by any preferred
statistical distance in~$\mathcal{Z}$~(see \autoref{fig:Overview}).

\section{Current state of graph generative model evaluation: MMD}

Of particular interest in
previous literature is the case of $\mathcal{Z} = \reals^d$ and using
the \emph{maximum mean discrepancy}~$\metric{MMD}\mleft(\cdot,
\cdot\mright)$ as a metric.
MMD is one of the most versatile and
expressive options available for comparing distributions of structured
objects such as graphs~\citep{Borgwardt06, Gretton07},
providing also a principled way to perform
two-sample tests~\citep{Bounliphone15, Gretton12, Lloyd15}.
It enables the comparison of two
statistical distributions by means of \emph{kernels}, i.e.\ similarity
measures for structured objects. Letting $\mathcal{X}$ refer to
a non-empty set, a function $\kernel\colon\mathcal{X} \times \mathcal{X} \to \reals$
is a kernel if $\kernel(x_i, x_j) = \kernel(x_j, x_i)$ for $x_i, x_j \in
\mathcal{X}$ and $\sum_{i,j} c_i c_j \kernel(x_i, x_j) \geq 0$ for $x_i,
x_j \in \mathcal{X}$ and $c_i, c_j \in \reals$. MMD uses such a kernel
function to assess the distance between two distributions.
Given~$n$ samples $X = \{x_1, \dots, x_n\} \subseteq
\mathcal{X}$ and~$m$ samples $Y = \{y_1, \dots, y_m\} \subseteq
\mathcal{X}$, the biased empirical estimate of the MMD between~$X$ and~$Y$
is obtained as
\begin{equation}
  \MMD^2(X, Y) := \frac{1}{n^2} \sum_{i,j = 1}^{n} \kernel(x_i, x_j)
  + \frac{1}{m^2} \sum_{i,j = 1}^{m} \kernel(y_i, y_j)
  - \frac{2}{nm} \sum_{i=1}^{n} \sum_{j = 1}^{m} \kernel(x_i, y_j).
  \label{eq:MMD}
\end{equation}
%
Since MMD is known to be a metric on the space of probability
distributions under certain conditions, \autoref{eq:MMD} is often
treated as a metric as well \added{\citep{You18, Liao19, Niu2020}}.\footnote{We will
  follow this convention in this paper and refer to \autoref{eq:MMD} as
  a distance. 
}
%
We use the unbiased empirical estimate of MMD~\citep[Lemma~6]{Gretton12} in our experiments, which removes the self-comparison terms in Equation~\ref{eq:MMD}. MMD has been adopted by the community and the current workflow includes two steps: 
\begin{inparaenum}[(i)]
  \item choosing a descriptor function $f$ as described above, and
  \item choosing a kernel on $\reals^d$ such as an RBF kernel. 
\end{inparaenum}
One then evaluates $\metric{MMD}\mleft(\graphs, \graphs^\ast\mright) := \MMD\mleft(f\mleft(\graphs\mright),
f\mleft(\graphs^\ast\mright)\mright)$ for a sample of graphs~$\graphs$.
Given multiple distributions $\{\graphs_1, \graphs_2, \dots\}$, the values
$\metric{MMD}\mleft(\graphs_i, \graphs^\ast\mright)$ can be used to
\emph{rank} models: smaller values are assumed to indicate a larger
agreement with the original distribution~$\graphs^\ast$. We will now
describe this procedure in more detail and highlight some of its
pitfalls.

\begin{table}[t]
  \caption{The kernels \& parameters chosen by three graph generative models for the MMD calculation.}
  \label{tab:kernel_choice}
  \resizebox{\textwidth}{!}{%
  \centering
  \begin{tabular}{lllccc}
  \toprule
  \multicolumn{1}{c}{\bf Model}  &\multicolumn{2}{c}{\bf Kernel} &\multicolumn{3}{c}{\bf Parameter choice $\sigma$ and $n_\mathrm{bin}$}
  \\ 
   &  & &  Degree & Clustering & Laplacian \\
  \toprule \\
  Model A             & EMD       & $\exp{(\nicefrac{W(x,y)}{2\sigma^2})}$ & $\sigma=1$, $n_\mathrm{bin} = \mathrm{max  degree}$ & $\sigma=0.1$, $n_\mathrm{bin} = 100$ & N/A\\
  Model B             & TV        & $\exp\mleft(-\frac{\metric{TV}(x, y)^2}{2\sigma^2}\mright)$ & $\sigma=1$, $n_\mathrm{bin} = \mathrm{max  degree}$ & $\sigma=0.1$, $n_\mathrm{bin} = 100$ & $\sigma=1$, $n_\mathrm{bin} = 200$ \\
  Model C             & RBF  & $\exp(-\nicefrac{\|x - y\|^2}{2\sigma^2})$ & $\sigma=1$, $n_\mathrm{bin} = \mathrm{max  degree}$ & $\sigma=0.1$, $n_\mathrm{bin} = 100$ & $\sigma=1$, $n_\mathrm{bin} = 200$ \\
  \bottomrule
  \end{tabular}}
\end{table}

\subsection{Kernels \& Descriptor functions}

Before calculating the MMD distance between two samples of graphs, we need
to define both the kernel function $\kernel$ and the
descriptor function~$f$ that will convert a graph $\graph$ to
a representation in~$\mathcal{Z} = \reals^d$, for use in the MMD calculation in
\autoref{eq:MMD}.
We observed a variety of kernel choices in the existing literature. In
fact, in three of the most popular graph generative models in use today,
which we explore in detail in this paper, a different kernel was chosen
for each one. These include a kernel using the first Wasserstein
distance~(EMD), total variation distance~(TV), and the radial basis
function kernel~(RBF), and are listed in Table~\ref{tab:kernel_choice}. 
In the current use of MMD, a descriptor function~$f$ is used to create
a vectorial representation of a graph for use in the kernel computation.
We find that several descriptor functions are commonly employed, either based
on summary statistics of a graph, such as degree distribution histogram
and clustering coefficient histogram, or based on spectral properties of
the graph, such as the Laplacian spectrum histogram. \added{While several papers also consider the orbit as a descriptor function, we do not consider it in depth here due to its computational complexity, which violates the ``efficiency'' property from our desiderata.} We will now provide
brief explanations of these prominent descriptor functions.
%
\paragraph{Degree distribution histogram.}
%
Given a graph $\graph = (\vertices, \edges)$, we obtain a histogram by
evaluating $\degree(v)$ for $v \in \vertices$, where position~$i$ of the
resulting histogram is the number of vertices with degree~$i$. Assuming
a maximum degree~$d$ and extending the histogram with zeros whenever
necessary, we obtain a mapping $f\colon\graphs\to\reals^d$. This
representation has the advantage of being easy to calculate and easy to
compare; by normalising it~(so that it sums to~$1$), we obtain
a size-invariant descriptor.
%
%
\paragraph{Clustering coefficient.}
%
The~(local) clustering coefficient of a vertex~$v$ is defined as the
fraction of edges within its neighbourhood divided by the number of all
possible edges between neighbours, i.e.\
\begin{equation}
  C(v) := \frac{2\mleft|\mleft\{(v_i, v_j) \in E \mid  \added{v_i \in \mathcal{N}(v) \lor v_j \in \mathcal{N}(v)} \mright\}\mright|}{\degree(v) \mleft(\degree(v) - 1\mright)}.
\end{equation}
The value of $C(v) \in [0, 1]$ measures to what extent a vertex~$v$ forms a clique~\citep{Watts98}.
The collection of all clustering coefficients of a graph can be binned
and converted into a histogram in order to obtain a graph-level descriptor. This
function is also easy to calculate but is inherently local; a graph
consisting of disconnected cliques or a fully-connected graph cannot be
distinguished, for example.
\paragraph{Laplacian spectrum histogram.} 
Spectral methods involve assigning a matrix to
a graph~$\graph$, whose spectrum, i.e.\ its eigenvalues and
eigenvectors, is subsequently used as a characterisation of~$\graph$.
Let $\mathbf{A}$ refer to the \emph{adjacency matrix} of~$\graph$, with
$\mathbf{A}_{ij} = 1$ if and only if vertices $v_i$ and $v_j$ are
connected by an edge in~$\graph$~(since~$\graph$ is undirected,
$\mathbf{A}$ is symmetric).
The \emph{normalised graph Laplacian} is defined as
$\laplacian := \mathbf{I} - \mathbf{D}^{-\frac{1}{2}} \mathbf{A} \mathbf{D}^{-\frac{1}{2}}$,
where $\mathbf{I}$ denotes the identity matrix and $\mathbf{D}$ refers
to the \emph{degree matrix}, i.e.\ $\mathbf{D}_{ii} = \deg(v_i)$ for
a vertex~$v_i$ and $\mathbf{D}_{ij} = 0$ for $i \neq j$. The
matrix~$\laplacian$ is real-valued and symmetric, so it is
diagonalisable with  a full set of eigenvalues and eigenvectors.
Letting $\lambda_1 \leq \lambda_2 \leq \dots$ refer to the eigenvalues
of~$\laplacian$, we have $0 \leq \lambda_i \leq 2$~\citep[Chapter~1,
Lemma~1.7]{Chung97}.
This boundedness lends itself naturally to a histogram
representation~(regardless of the size of $\graph$), making it possible
to bin the eigenvalues and use the resulting histogram as a simplified
descriptor of a graph. The expressivity of such a representation is not
clear a priori; the question of whether graphs are fully determined by
their spectrum is still open~\citep{Dam03} and has only been partially
answered \emph{in the negative} for certain classes of
graphs~\citep{Schwenk73}.

\section{Issues with the current practice}\label{subsec:Issues}

Common practice in graph generative model papers is to use MMD by
fixing a kernel and parameter values for the descriptor functions and
kernel, and then assessing the newly-proposed model as well as
its competitor models using said kernel and parameters.
Authors chose
different values of $\sigma$ for the different descriptor functions, but
to the best of our knowledge, all parameters are set to a fixed value
a priori \emph{without} any selection procedure. If MMD were able to give
results and rank different models in a stable way across different
kernel and parameter choices, this would be inconsequential. However,
as our experiments will demonstrate, the results of MMD are \emph{highly
sensitive} to such choices and can therefore lead to an arbitrary
ranking of models. 

Subsequently, we use three current real-world models, GraphRNN~\citep{You18},
GRAN~\citep{Liao19}, and Graph Score Matching~\citep{Niu2020}.  We ran
the models using the author-provided implementations to generate new
graphs on the Community, Barab\'{a}si-Albert, Erd\"{o}s-R\'{e}nyi, and Watts-Strogatz graph datasets, and then calculated the MMD distance between
the generated graphs and the test graphs, using the different 
\begin{inparaenum}[(i)]
  \item kernels that they used~(EMD, TV, RBF),
  \item descriptor functions~(degree histogram, clustering coefficient histogram, and Laplacian spectrum), and
  \item parameter ranges~($\sigma, \lambda \in \{10^{-5}, \ldots,
    10^{5}\}$).
\end{inparaenum}
For simplicity, we will refer to the parameter in all kernels as
$\sigma$. We purposefully \emph{refrained from using model names},
preferring instead `A, B, C' in order to focus on the issues imposed by
such an evaluation, rather than providing a commentary on
the performance of a specific model.

In the following, we will delve deeper into issues originating from the individual components of
the graph generative model comparison in use today. Due to space constraints, examples are provided on individual
datasets; full results across datasets are available in
Appendix~\ref{sec:Full perturbation results}. 

\begin{figure}[tbp]
    \centering
    \begin{subfigure}[b]{0.33\textwidth}
        \resizebox{\textwidth}{!}{%
        \includegraphics[width=\textwidth]{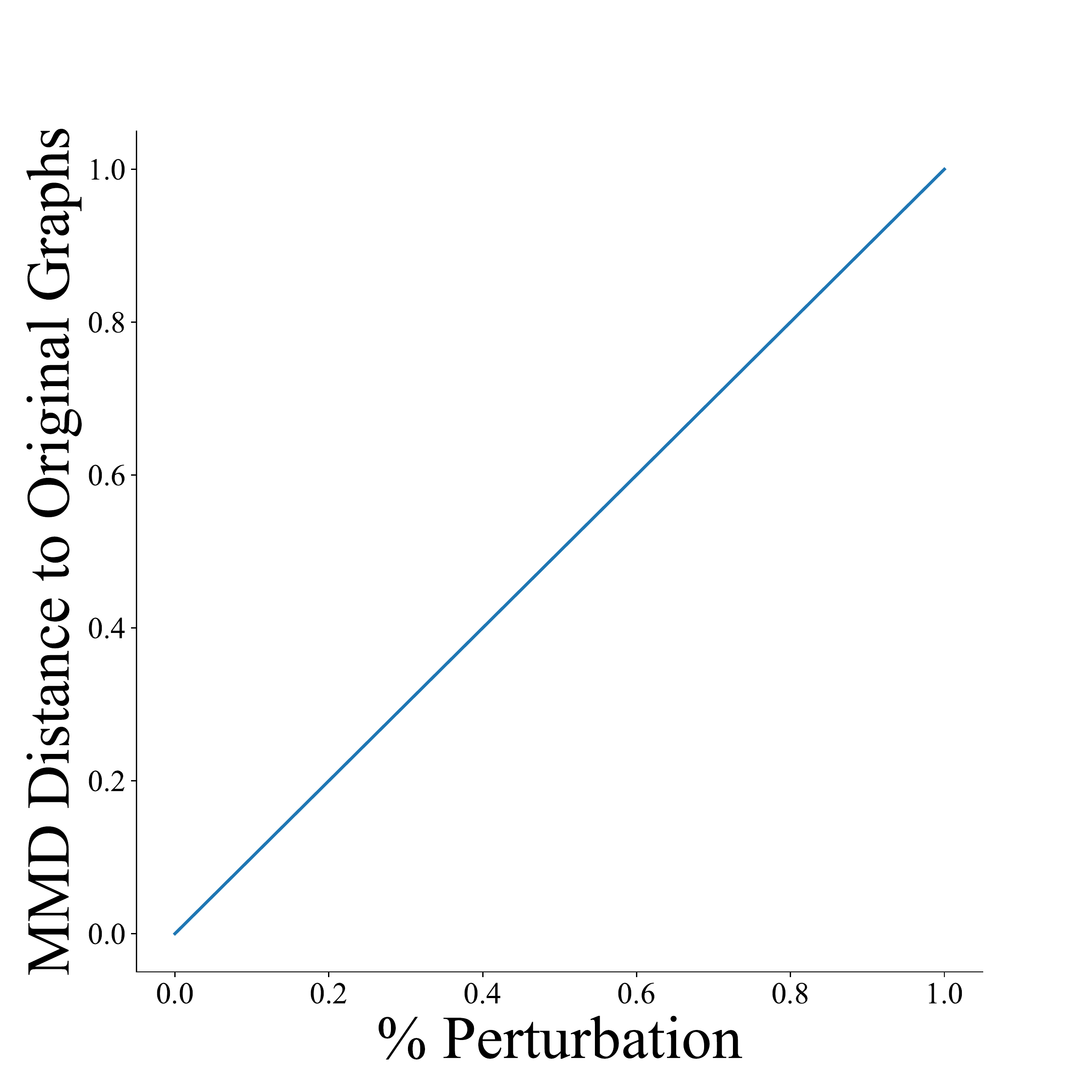}
        }%
        \caption{}
        \label{subfig:ideal_metric}
    \end{subfigure}%
        \hfill
    \begin{subfigure}[b]{0.33\textwidth}
        \resizebox{\textwidth}{!}{%
        \includegraphics[width=\textwidth]{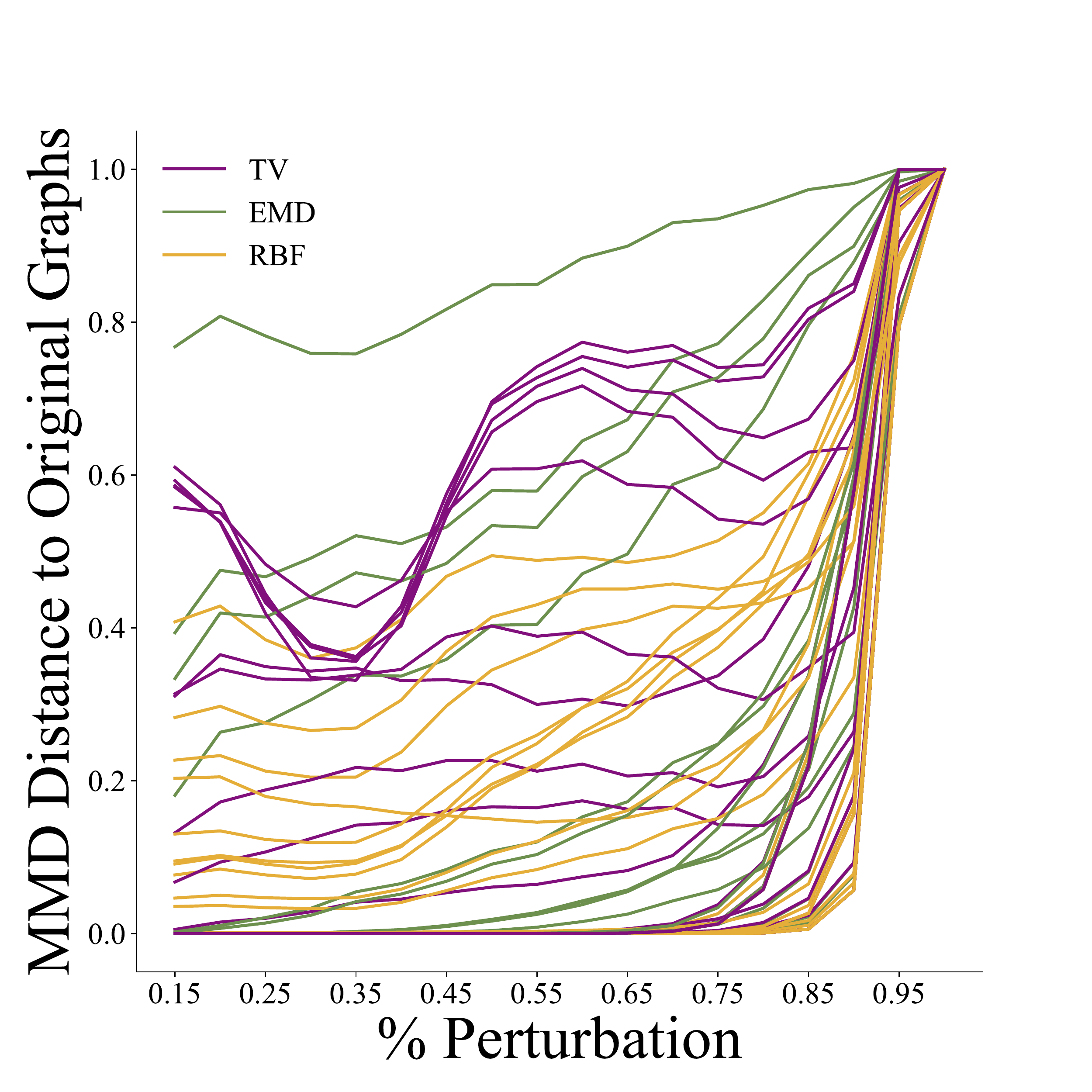}
        }%
        \caption{}
        \label{subfig:cc_mmd}
    \end{subfigure}%
    \hfill
    \begin{subfigure}[b]{0.33\textwidth}
        \resizebox{\textwidth}{!}{%
        \includegraphics[width=\textwidth]{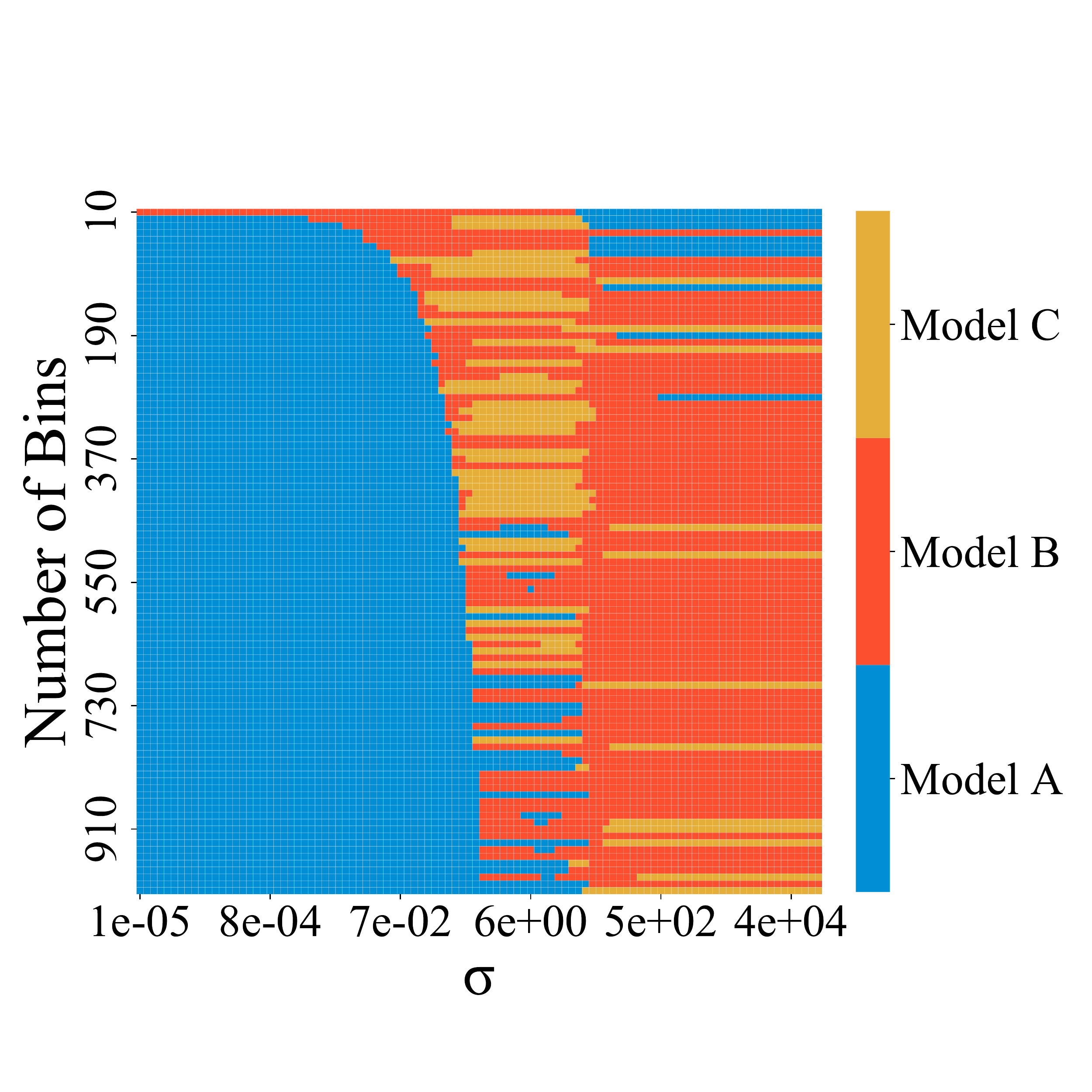}
        }%
        \caption{}
        \label{subfig:parameter_heatmap}
    \end{subfigure}%
    \caption{%
      \autoref{subfig:ideal_metric} shows the ideal behaviour of a graph generative model evaluator: as two distributions of graphs become increasingly dissimilar, e.g. via perturbations, the metric should grow proportionally. \autoref{subfig:cc_mmd} shows the behaviour of the current choices in reality; each line represents the normalized MMD for a given kernel and parameter combination. A cautious choice of kernel and parameters is needed in order to obtain a metric with behaviour similar to \autoref{subfig:ideal_metric}. Each square in \autoref{subfig:parameter_heatmap} shows which model performs best (out of A, B, and C) over a grid of hyperparameter combinations of $\sigma$ and number of bins in the histogram. Any model can rank first with an appropriate hyperparameter selection, showcasing the sensitivity of MMD to the hyperparameter choice.
    }
    \label{fig:MMD_pitfalls}
\end{figure}

\subsection{Nuances of using MMD for graph generative model evaluation}\label{subsec:mmd_issues}

While MMD may seem like a reasonable first choice as a metric for
comparing distributions of graphs, it is worth mentioning two
peculiarities of such a choice and how authors are currently applying it
in this context. First, MMD was originally developed as an
approach to perform two-sample testing on structured objects such as
graphs.  As such, its suitability was investigated in that context, and
\emph{not} in that of evaluating graph generative models. This warrants
an investigation of the implications of ``porting'' such a method from one
context to another.

The second peculiarity worth mentioning is that MMD was groundbreaking
for its ability to compare distributions of structured objects by means
of a kernel, thus bypassing the need to employ an intermediate vector
representation. Yet, in its current application, the graphs are first
being vectorised, and then MMD is used to compare the vectors.  MMD is
not technically required in this case: any statistical method
for comparing vector distributions could be employed, such as Optimal
Transport~(OT).
While the use of different evaluators besides MMD for assessing graph generative models is an interesting area for future research, we focus specifically on MMD, since this is what is in use today, and now highlight two practical issues that can arise from using MMD in this context.
\paragraph{MMD's ability to capture differences between distributions is kernel- and parameter-dependent.}
As two distributions become sufficiently dissimilar, we find that their
distance should monotonically increase as a function of their
dissimilarity.
While one can construct specific scenarios in which two distributions
become farther apart but the distance does not monotonically
increase~(e.g., removing edges from a triangle-free graph, and
using the clustering coefficient as the descriptor function),
in such cases, the specific choice of descriptor function~$f$ is crucial to
ensure that~$f$ can capture differences in distribution.  
However, it is not guaranteed that MMD will monotonically increase as two
distributions become increasingly dissimilar. \autoref{subfig:cc_mmd}
depicts this behaviour when focusing on a single descriptor
function, the clustering coefficient, with each line representing
a unique kernel and parameter choice~(see Appendix~\ref{sec:Full
perturbation results} for
the full results for other
datasets and descriptor functions). %
We subject an original set of graphs to perturbations of increasing
magnitude and then measured the MMD distance to the original distribution.
Despite both distributions becoming progressively dissimilar by
experimental design, a large number of kernel/parameter configurations
\emph{fail} to capture this, showing that MMD is highly sensitive to this choice.
In many instances, the distance remains nearly constant, despite the
increased level of perturbation, until the magnitude of the perturbation
reaches an extraordinarily high level.
In some cases, we observe that the distance even \emph{decreases} as the degree
of perturbation \emph{increases}, suggesting that the original data set
and its perturbed variant are more similar.
We also find that the MMD values as a function of the degree
of perturbation are highly sensitive to the kernel and
parameter selection, as evidenced by the wide range of different curve shapes
observed in \autoref{subfig:cc_mmd}.
%
\paragraph{MMD has no inherent scale.}
Another challenge with MMD is that since current practice works with the raw MMD distance, as opposed to p-values, as originally proposed in \citet{Gretton12}, there is no inherent scale of the MMD values. It is therefore difficult to assess whether the smaller MMD distance of
one model is \emph{substantially} improved when compared to the MMD
distance of another model.
For instance, suppose that the MMD distance of one model is
\num{5.2e-8}. Is this a meaningfully better model than one whose MMD
distance is \num{4.6e-7}? This is further compounded by the choice of
kernel and parameter, which also affects the scale of MMD. Since common heuristics can lead to a suboptimal choice of $\sigma$ for MMD~\citep{Sutherland17},
authors may obtain an arbitrarily low value of MMD,
as seen in Figure~\ref{subfig:emd_clustering_rank} and
\ref{subfig:gaussian_clustering_rank}.
As MMD results are typically reported just in a table, the lack of
a scale hinders the reader's ability to effectively assess a model's
performance.
%

\begin{figure}[tbp]
    \centering
    \begin{subfigure}[b]{0.25\textwidth}
        \resizebox{\textwidth}{!}{%
        \includegraphics[width=\textwidth]{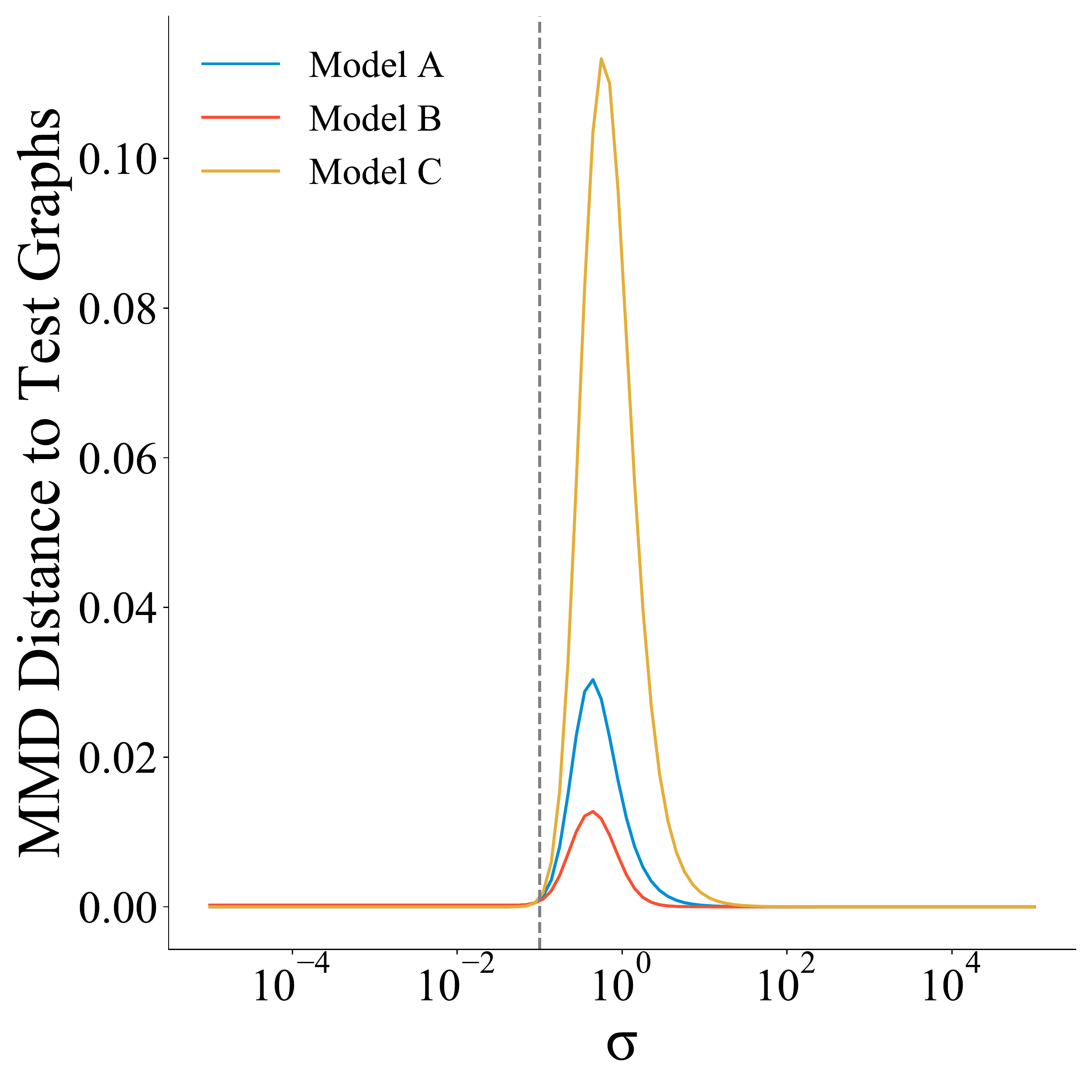}
        }%
    \end{subfigure}%
    \hfill
    \begin{subfigure}[b]{0.25\textwidth}
        
        \resizebox{\textwidth}{!}{%
        \includegraphics[width=\textwidth]{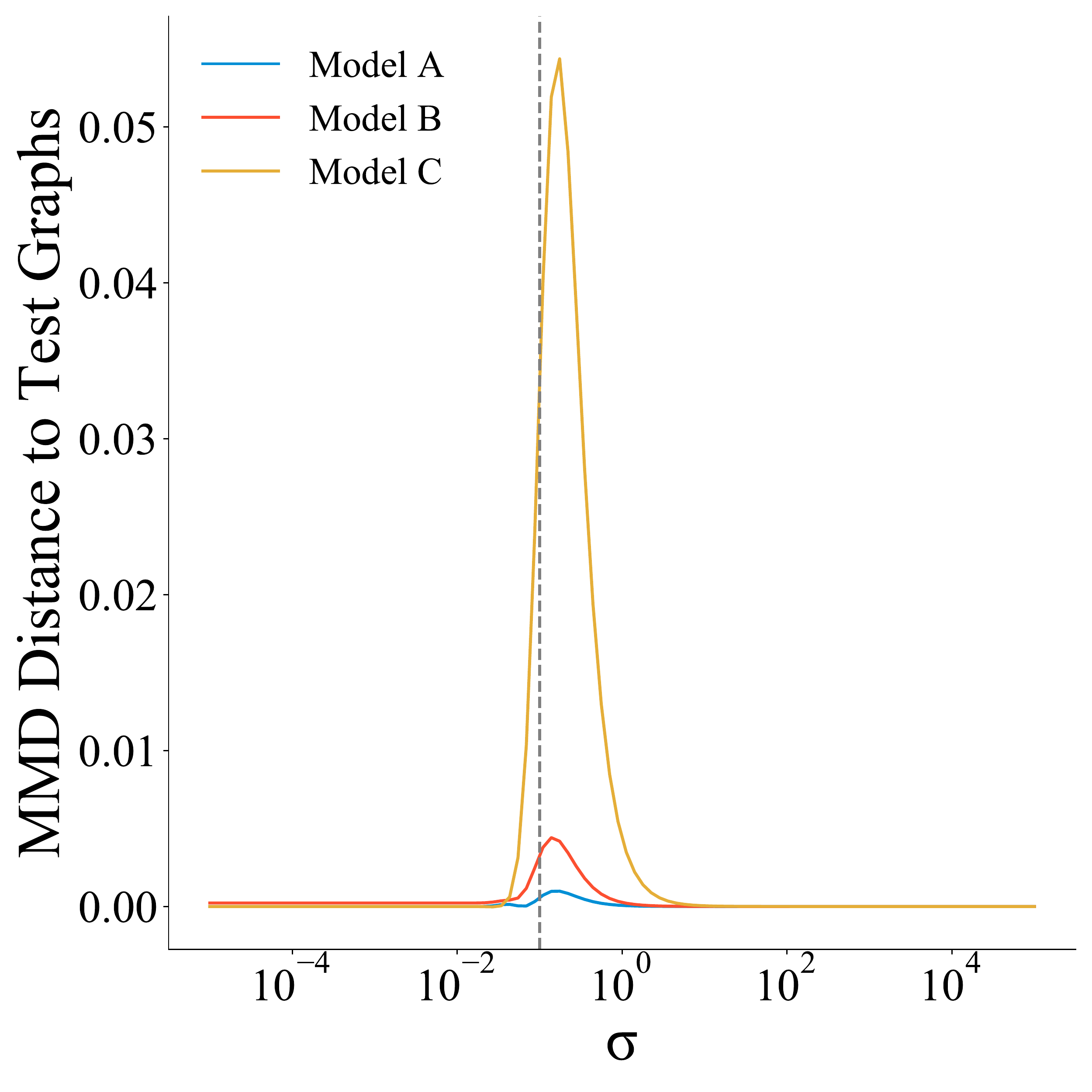}
        }%
    \end{subfigure}%
    \hfill%
    \centering
    \begin{subfigure}[b]{0.25\textwidth}
        \resizebox{\textwidth}{!}{%
        \includegraphics[width=\textwidth]{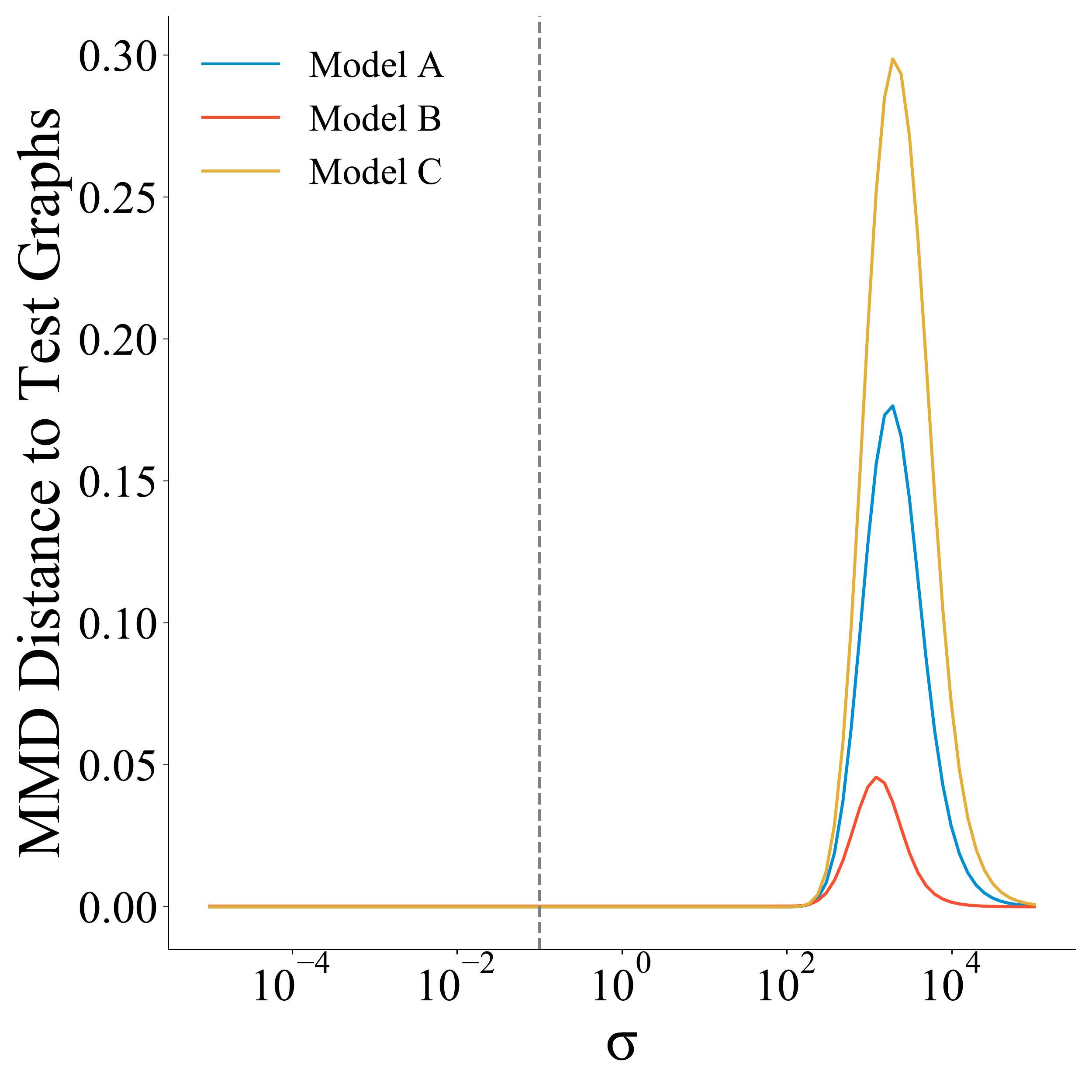}
        }%
    \end{subfigure}%
    \hfill
    \begin{subfigure}[b]{0.25\textwidth}
        \resizebox{\textwidth}{!}{%
        \includegraphics[width=\textwidth]{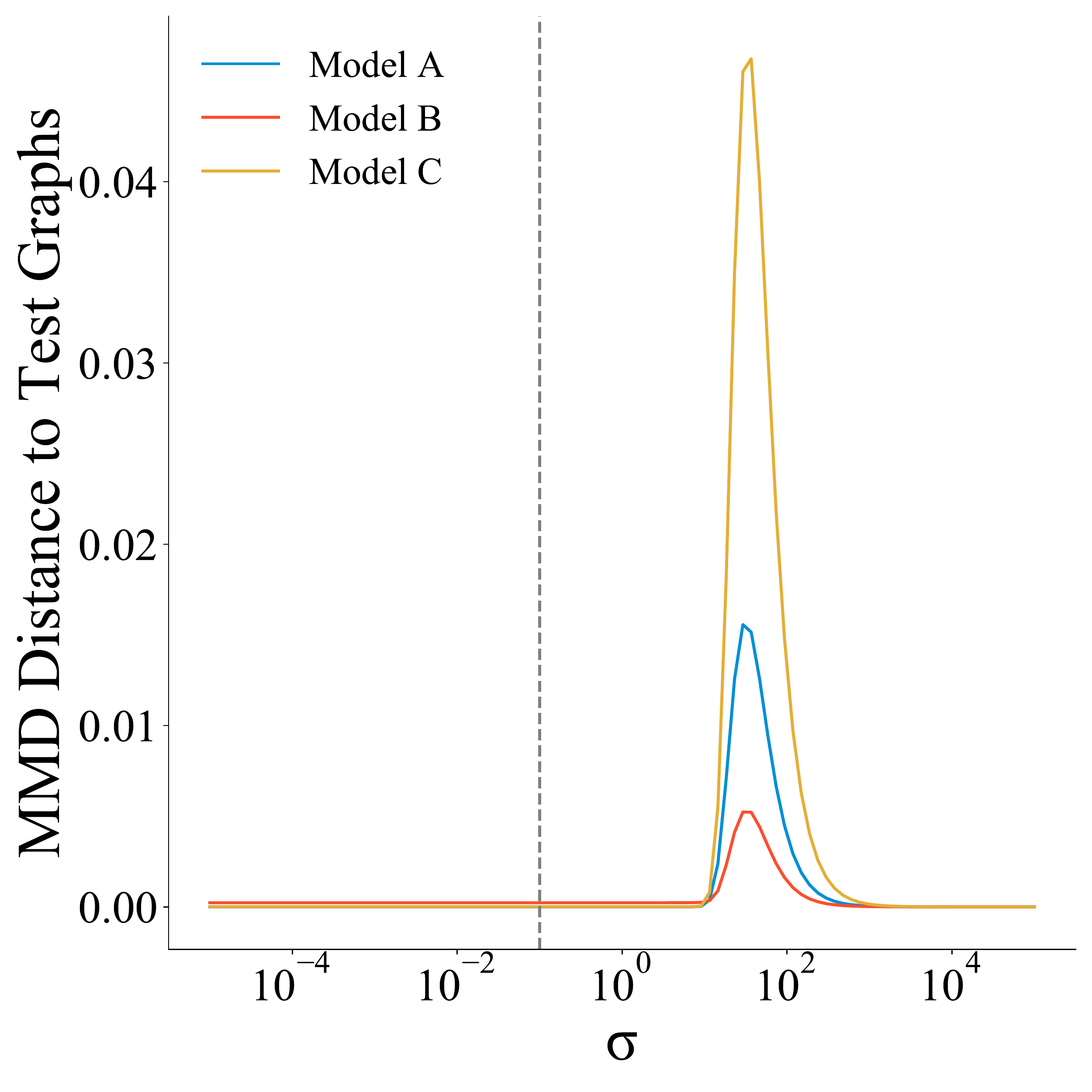}
        }%
    \end{subfigure}%
    \\
    \hspace{11pt}
    \begin{subfigure}[b]{0.215\textwidth}
        \resizebox{\textwidth}{!}{%
        \includegraphics[width=\textwidth]{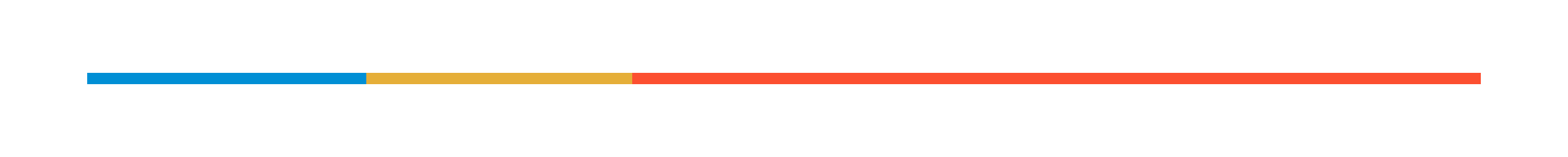}
        }%
        \caption{EMD, degree}
        \label{subfig:emd_degree_rank}
    \end{subfigure}%
    \hfill
        \hspace{11pt}
    \begin{subfigure}[b]{0.215\textwidth}

        \resizebox{\textwidth}{!}{%
        \includegraphics[width=\textwidth]{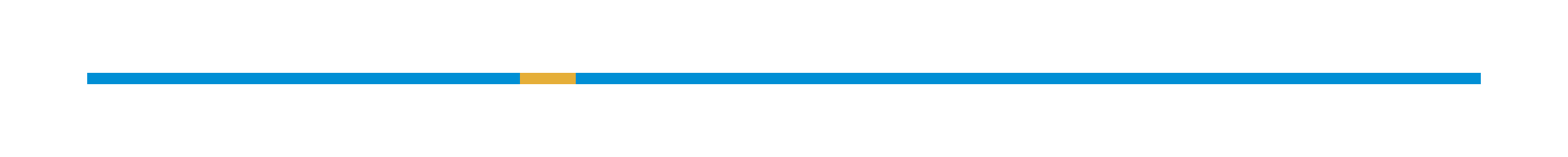}
        }%
        \caption{RBF, degree}
        \label{subfig:gaussian_degree_rank}
    \end{subfigure}%
    \hfill
        \hspace{11pt}
    \begin{subfigure}[b]{0.215\textwidth}
        \resizebox{\textwidth}{!}{%
        \includegraphics[width=\textwidth]{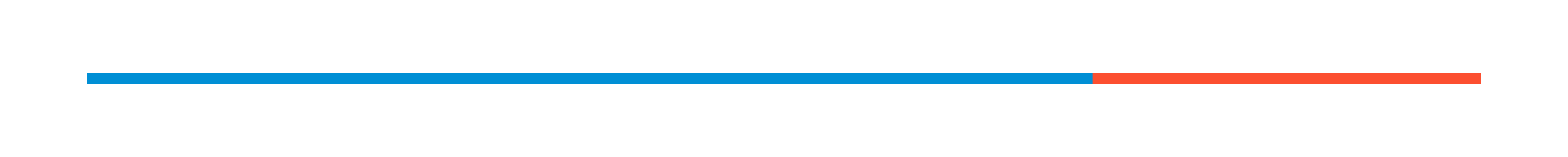}
        }%
        \caption{EMD, clust. coef.}
        \label{subfig:emd_clustering_rank}
    \end{subfigure}%
    \hfill
    \hspace{11pt}
    \begin{subfigure}[b]{0.215\textwidth}
        \resizebox{\textwidth}{!}{%
        \includegraphics[width=\textwidth]{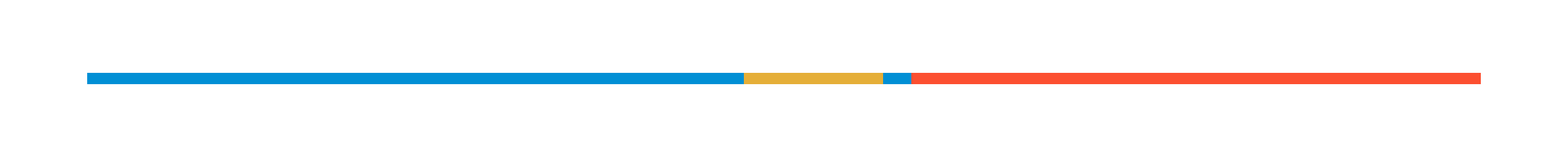}
        }%
        \caption{RBF, clust. coef.}
        \label{subfig:gaussian_clustering_rank}
    \end{subfigure}%
    \caption{%
      This shows the MMD distance to the test set of graphs for three recent graph generative models~(whose
      names we intentionally omitted) on the Community Graphs dataset for different descriptor functions and kernels. MMD requires the choice of a kernel and kernel parameters. Each subfigure shows MMD (lower is better) along a range of values of $\sigma$ (reported on a $\log$ scale), with the bar underneath indicating which model ranks first for the given value of $\sigma$. The grey line indicates the $\sigma$ chosen by the authors.  Subfigures~\ref{subfig:emd_degree_rank} and \ref{subfig:gaussian_degree_rank} show how simply switching from the EMD to the RBF kernel (holding $\sigma$ constant) can change which model performs best; Subfigures~\ref{subfig:emd_clustering_rank} and \ref{subfig:gaussian_clustering_rank} show how the choice of $\sigma$ by the authors misses the area of maximum discrimination of MMD.
    }
    \label{fig:kernel_sigma_pitfalls}
\end{figure}

\subsection{Consequences of the choice of kernel}\label{subsec:kernel_issues}

MMD requires the choice of a kernel function, yet there is no agreed-upon method
to select one. We find that each of the three models considered in this
paper choose a different kernel for its evaluation, as evidenced in
Table~\ref{tab:kernel_choice}. There are several potential issues with
the current practice, which we will subsequently discuss, namely
\begin{inparaenum}[(i)]
  \item\label{item:Efficiency} the selected kernel might be computationally inefficient to
    compute, as is the case for the previously-used Earth Mover's
    Distance~(EMD) kernel function,
  \item\label{item:Validity} the use of positive definite kernel functions is a limitation,
    with previous work employing functions that are
    \emph{not} valid kernels, and
   \item\label{item:Kernel ranking} finally, the kernel choice may result in arbitrary rankings, and there 
     is currently little attention paid to this choice.
\end{inparaenum}
\paragraph{Computational cost of kernel computation.} Issue~(\ref{item:Efficiency}) might prevent an evaluation metric to be
used in practice, thus stymieing graph generative model development.
While the choice of a kernel using the first Wasserstein distance is
a valid choice, it is extremely slow to compute, as noted by
\cite{Liao19}. From this perspective, it violates the third quality of our desiderata: efficiency.
\paragraph{Kernels need to be p.s.d.}
To reduce the aforementioned computational costs, previous
work~\citep{Liao19} used a kernel based on the total variation distance
between histograms, which has since been used in subsequent publications
as one of the ways to evaluate different models.
As we show in Appendix~\ref{sec:Total variation distance}, this approach
leads to an \emph{indefinite kernel}~(i.e.\ the kernel is neither
positive definite nor negative definite), whose behaviour in the context
of MMD is not well-defined. MMD necessitates the use of p.s.d. kernels,
and care must be taken when it comes to interpreting the respective
results. 
\paragraph{Arbitrary ranking based on kernel choice.}
Issue~(\ref{item:Kernel ranking}) relates to the fact that changing the
kernel can lead to different results of model evaluation, which is
problematic since each paper we considered used a different kernel. For
instance, with the degree distribution as the graph descriptor function,
simply changing the choice of kernel from EMD to RBF~(while holding the
parameters constant) leads to a different ranking of the models, which
can be seen in
\autoref{subfig:emd_degree_rank}--\autoref{subfig:gaussian_degree_rank},
where the best performing model changes merely by changing the kernel
choice~(!). This type of behaviour is highly undesired and problematic,
as it implies that model performance~(in terms of the evaluation) can be
improved by choosing a different kernel.

\subsection{Effect of the choice of hyperparameters}\label{subsec:parameter_issues}

While the choice of which kernel to use in the MMD calculation is itself
an \emph{a priori} design choice without clear justification, it is
further exacerbated by the fact that many kernels require picking
parameters, without any clear process by which to
choose them. To the best of our knowledge, this selection of parameters
is glossed over in publications at present.\footnote{%
  We remark that literature outside the graph generative modelling
  domain describes such choices, for example in the context of
  two-sample tests~\citep{Gretton12a} or general model
  criticism~\citep{Sutherland17}. We will subsequently discuss to what
  extent the suggested parameter selection strategies may be
  transferred.
}
For instance, Table~\ref{tab:kernel_choice} shows how authors are setting the
value of $\sigma$ differently for different descriptor functions, yet
there is no discussion nor established best practice of \emph{how} such
parameters were or should be set. Hyperparameter selection is known to
be a crucial issue in machine learning that mandates clear selection
algorithms in order to avoid biasing the results.
If the choice of parameters---similar to the choice of kernel---had no
bearing on the outcome, this would not be problematic, but our empirical
experiments prove that drastic differences in model evaluation
performance can occur.
\paragraph{Arbitrary ranking based on parameter choice.}
The small colour bars underneath each plot of
Figure~\ref{fig:kernel_sigma_pitfalls} show the model that achieves the
lowest MMD for a given value of $\sigma$. Changes in colour highlight
a sensitivity to specific parameter values. Even though the plots showing MMD seem to have a general trend of which model is best in the peak of the curves, in the other regions of $\sigma$, the ranking switches, with the effect that a different model would be declared ``best.'' This is particularly the case in Subfigures~\ref{subfig:emd_degree_rank}, \ref{subfig:emd_clustering_rank}, and \ref{subfig:gaussian_clustering_rank}, where Model B appears to be the best, yet for much of $\sigma$, a different model ranks first.
This sensitivity is further exacerbated by the fact that some
descriptor functions require a parameter choice as well, such as the bin
size, $n_\mathrm{bin}$, for a histogram. 
Figure~\ref{subfig:parameter_heatmap}
shows how the best-ranking model on the Barab\'{a}si-Albert Graphs is entirely dependent upon the
choice of parameters ($n_\mathrm{bin}$, $\sigma$).
The colour in each grid cell corresponds to the best-ranking model for a given $\sigma$ and $n_\mathrm{bin}$; we
find that this is wildly unstable across both $n_\mathrm{bin}$ and $\sigma$.
The consequence of this is alarming: \emph{any} model could rank first if the
right parameters are chosen.
\paragraph{Choice of $\sigma$ by authors does \emph{not} align with maximum
discrimination in MMD.}
Figures~\ref{subfig:emd_clustering_rank} and
\ref{subfig:gaussian_clustering_rank} show MMD for different values of
$\sigma$ with the clustering coefficient descriptor function in the
Community Graphs dataset for Models A, B and C. The value of $\sigma$
as selected by the authors~(without justification or discussion) is indicated
by the grey line; however, this choice corresponds to a regions of low
activity in the MMD curve, suggesting a poor parameter choice. While Model~B seems to be the clear winner, the choice of $\sigma$ by the authors resulted in Model~A having
the best performance. Furthermore, it is not clear that choosing the
same $\sigma$ across different kernels, as is currently done, makes
sense; whereas $\sigma = 10^3$ would be sensible for EMD in
Figure~\ref{subfig:emd_clustering_rank}, for the Gaussian kernel in
Figure~\ref{subfig:gaussian_clustering_rank}, such a choice too far
beyond the discriminative peak.

\section{How to use MMD for graph generative model evaluation}\label{sec:Practical_recommendations}

Having understood the potential pitfalls of using MMD, we now turn to
suggestions on how to better leverage MMD for graph generative model
evaluation. 
\paragraph{Provide a sense of scale.}
As mentioned in Section~\ref{subsec:mmd_issues}, MMD does not have an
inherent scale, making it difficult to assess what is `good.'
\added{To endow their results with some meaning, practitioners should calculate MMD between the test and training graphs, and then include this in the results table/figures alongside the other MMD results. This will provide a meaningful bound on what two  `indistinguishable' sets of graphs look like in a given dataset~(see Figures~\ref{fig:Full perturbation results: MMD, Adding Edges}--\ref{fig:Full perturbation results: MMD, Add Connected Nodes} in Appendix~\ref{sec:Full perturbation results}}).
\paragraph{Choose valid and efficient kernel candidates.}
We recommend to avoid the EMD-based kernel due to the computational burden \added{(see Appendix~\ref{sec:kernel_runtime})}, and the total variation kernel for its non-p.s.d nature.
%
Instead, we suggest using either an RBF
kernel, \added{since it is a universal kernel,} or a Laplacian kernel, or a linear
kernel, i.e.\ the canonical inner product on $\reals^d$, since it is
parameter-free. As all of these kernels are p.s.d.,\footnote{%
  In the case of the Laplacian kernel, the TV
  distance in lieu of the Euclidean distance leads to a valid
  kernel.
}
and are fast to compute, they
satisfy the efficiency desiderata criteria, and thus only require 
analysis of their expressivity and robustness.
\paragraph{Utilize meaningful descriptor functions.}
Different descriptor functions measure different aspects of the graph
and are often domain-specific. As a general recommendation, we propose
using previously-described~\citep{You18, Liao19}  graph-level descriptor
functions, namely
\begin{inparaenum}[(i)]
  \item the degree distribution,
  \item the clustering coefficient, and
  \item the Laplacian spectrum histograms,
\end{inparaenum}
and recommend that the practitioner make domain-specific adjustments based on what is
appropriate.
%
%
%

\subsection{Selecting an appropriate kernel and hyperparameters}

The kernel choice and descriptor functions require hyperparameter
selection. We recommend assessing the performance of MMD in
a controlled setting to elucidate some of the properties specific to the
dataset and the descriptor functions of interest to a given application. In
doing so, it becomes possible to choose a kernel and parameter
combination that will yield informative results.
Notice that in contrast to images, where visualisation provides
a meaningful evaluation of whether two images are similar or not, graphs
cannot be assessed in this manner. It is thus necessary to have
a principled approach where the degree of difference between two
distributions can be controlled. We subject a set of graphs to perturbations~(edge insertions,
removals, etc.) of increasing
magnitude, thus enabling us to assess the expected degree of difference
to the original graphs.

We ideally want an evaluation metric to effectively reflect the degree
of perturbations.  Hence, with an increasing degree of
perturbation of graphs,
the distance to the original distribution~$\graphs^\ast$ of unperturbed
graphs should increase.
We can therefore assess both the expressivity of the evaluation metric, i.e., its
ability to distinguish two distributions when they are different, and
its robustness~(or stability) based on how rapidly such a metric changes
when subject to small perturbations. Succinctly, we would like to see
a clear correlation of the metric with the degree of perturbation,
thus indicating both robustness and expressivity.
%
Perturbation experiments are particularly appealing because they do not require access to other models but rather only an initial distribution~$\graphs^\ast$.
This procedure therefore does not leak any information from
models and is unbiased. Moreover, researchers have more
control over the ground truth in this scenario, as they can adjust
the desired degree of dissimilarity.
While there are many perturbations that we will consider (adding edges,
removing edges, rewiring edges, and adding connected nodes), we focus
primarily on progressively adding or removing edges. This is the graph
analogue to adding ``salt-and-pepper'' noise to an image, and in its most extreme
form (100\% perturbation) corresponds to a fully-connected graph and
fully-disconnected graph, respectively. 
\paragraph{\added{Creating dissimilarity via perturbations}.}
\added{For each perturbation type, i.e., }%
\begin{inparaenum}[(i)]
  \item \added{random edge insertions,} 
  \item \added{random edge deletions,} 
  \item \added{random rewiring operations~(`swapping' edges)},
  \item \added{random node additions,}
\end{inparaenum}
\added{we progressively perturb the set of graphs, using the relevant perturbation parameters, in order to obtain multiple sets of graphs that are increasingly dissimilar from the original set of graphs. Each perturbation is parametrised by at least one parameter. When removing edges, for instance,
the parameter is the probability of removing an edge in the graph.  Thus for a graph
with $100$ edges and $p_\text{remove}=0.1$ we would expect on average $90$ edges to
remain in the graph.  Similar parametrisations apply for the other
perturbations, i.e.\ the probability of adding an edge for edge insertions, the probability of rewiring an edge for edge rewiring, and the number of nodes to add to a graph, as well as the probability of an edge between the new node and the other nodes in the graph for adding connected nodes. We provide a formal description of each process in Appendix~\ref{sec:Experimental Setup} and \ref{sec:Details graph perturbations}.}
\paragraph{Correlation analysis to choose a kernel and hyperparameters.}
For each of the aforementioned perturbation types, we compared the graph
distribution of the perturbed graphs with the original graphs using the
MMD.  We repeated this for different scenarios,
comprising different kernels, different descriptor functions, and
where applicable, parameters. For a speed up trick to efficiently calculate MMD over a range of $\sigma$, please see Appendix~\ref{sec:Speed up}.
Since these experiments resulted in hundreds of
configurations, due to the choice of kernel, descriptor
function, and parameter choices, we relegated most of the visualisations
to the Appendix~(see \autoref{sec:Full perturbation results}).
To compare the different configurations effectively, we
needed a way to condense the multitude of results into a more
interpretable and comparable visualisation. We therefore calculated
Pearson's correlation coefficient between the degree of perturbation and
the resulting MMD distance, 
%
obtaining  two heatmaps. The first one shows the \emph{best
parameter choice}, the second one shows the \emph{worst parameter
choice}, both measured in terms of Pearson's correlation coefficient~(\autoref{fig:Perturbation Heatmap}).
In the absence of an agreed-upon procedure to choose such
parameters, the heatmaps effectively depict the extremes of what will
happen if one is particularly ``lucky'' or ``unlucky'' in the choice of
parameters. A robust combination of descriptor function and comparison
function is characterised by both heatmaps exhibiting high correlation
with the degree of perturbation.  In the bottom row, the MMD distance in many cases no longer shows
\emph{any correlation} with the degree of perturbation, and in the case of the
clustering coefficient~(CC), is even negatively correlated with the
degree of perturbation. Such behaviour is undesired, showcasing
a potential pitfall for authors if they inadvertently fail to 
pick a ``good'' parameter or kernel combination. \added{While we chose the Pearson correlation coefficient for its simplicity and interpretability, other measures of dependence could be used instead if they are domain-appropriate~(see Appendix~\ref{sec:Alternative correlation measures}).}
To choose a kernel and parameter combination, we suggest picking the
parameters with the highest correlation for the perturbation that is
most meaningful in the given domain. Lacking this, the practitioner could choose the combination that has the
highest \emph{average} correlation across perturbations. 

\begin{figure}[tbp]
    \centering
    
    \begin{subfigure}[b]{0.24\textwidth}
        \includegraphics[height=4cm]{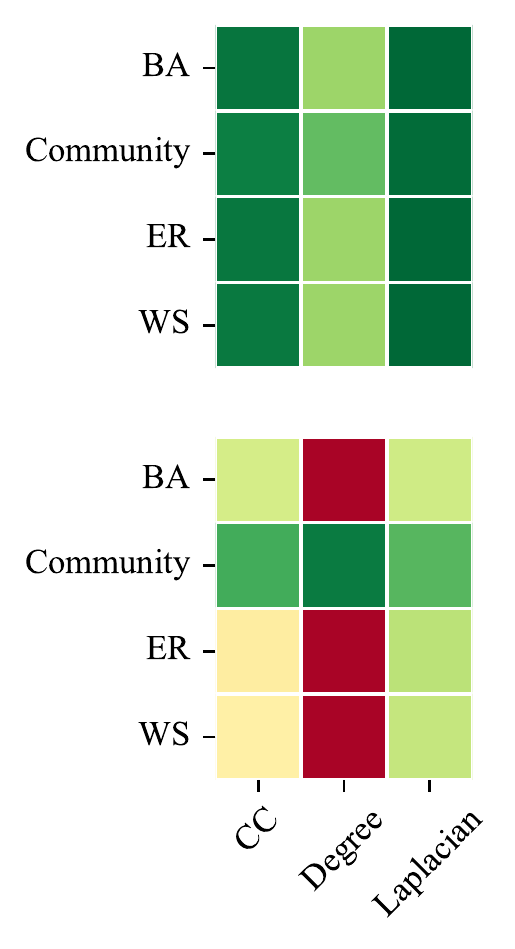}
        \caption{AddEdges}
    \end{subfigure}%
    \begin{subfigure}[b]{0.24\textwidth}
        \includegraphics[height=4cm]{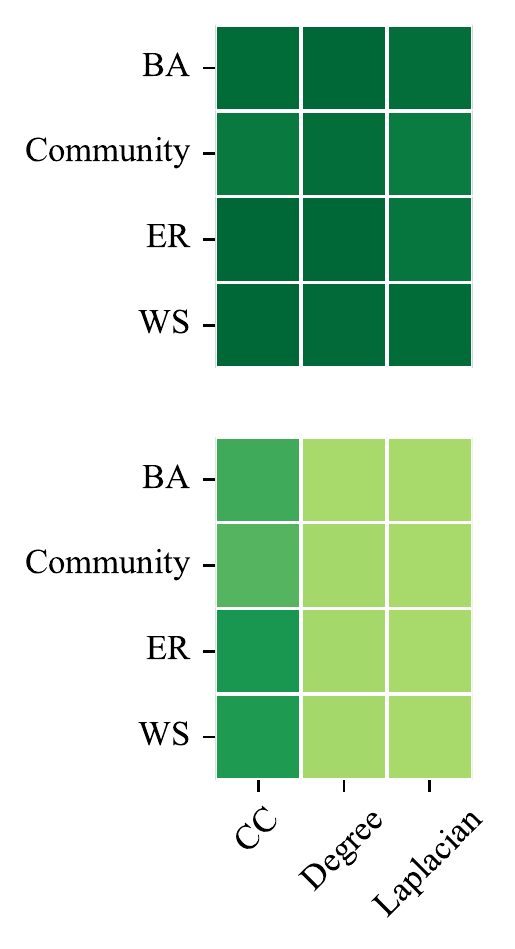}
        
        \caption{RemoveEdges}
    \end{subfigure}%
    \begin{subfigure}[b]{0.24\textwidth}
        \includegraphics[height=4cm]{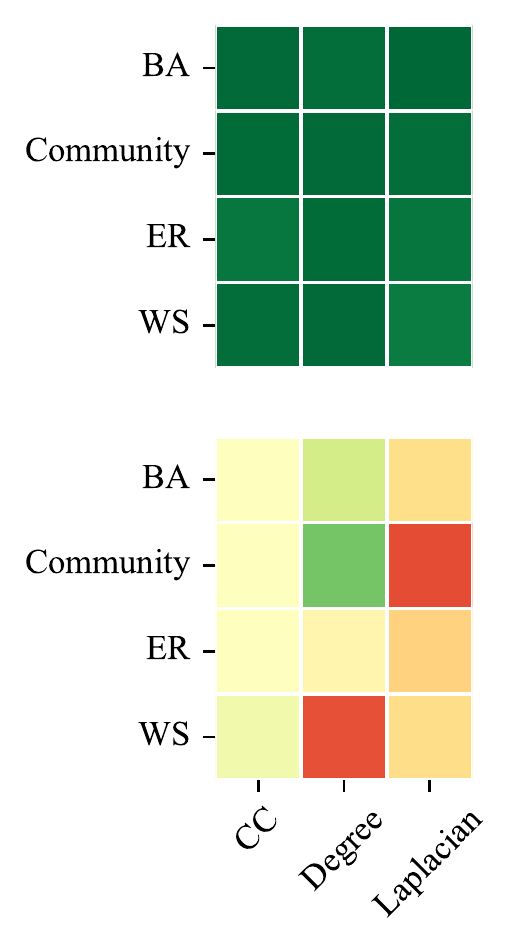}
        \caption{RewireEdges}
    \end{subfigure}
    \begin{subfigure}[b]{0.24\textwidth}
        \includegraphics[height=4cm]{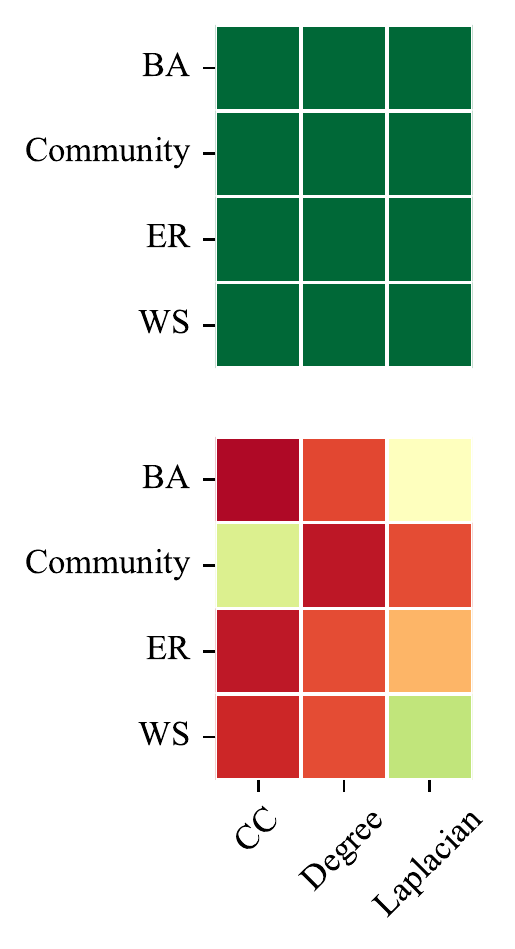}
        \caption{AddConnectedNodes}
    \end{subfigure}
    \begin{subfigure}[b]{0.02\textwidth}
        \includegraphics[height=4cm]{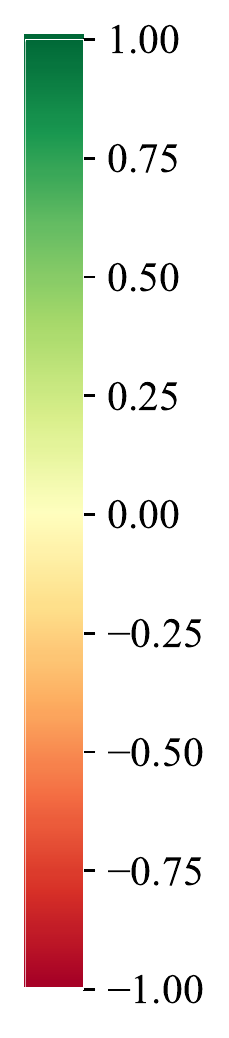}
        \caption*{}
    \end{subfigure}
    \caption{%
      The correlation of MMD with
      the degree of perturbation in the graph, assessed for different
      descriptor functions and datasets (BA: Barab\'{a}si-Albert, ER: Erd\"{o}s-R\'{e}nyi, WS: Watts-Strogatz). For an ideal metric, the
      distance would increase with the degree of perturbation,
      resulting in values $\approx 1$. The upper row shows the
      \emph{best} kernel-parameter combination; the bottom row shows the \emph{worst}. A proper kernel and parameter selection leads to strong correlation to the perturbation, but a bad choice can lead to inverse correlation, highlighting the importance of a good kernel/parameter combination. 
    }
    \label{fig:Perturbation Heatmap}
\end{figure}

\section{Conclusion}\label{sec:Discussion}
%
We provided a thorough analysis of how graph generative models are being
currently assessed by means of MMD. While MMD itself is powerful and
expressive, its use has
certain idiosyncratic issues that need to be avoided in order to obtain
fair and reproducible comparisons.
We highlighted some of these issues, most critical of which are that the
choice of kernel and parameters can result in different rankings of different
models, and that MMD may not monotonically increase as two graph distributions
become increasingly dissimilar.
As a mitigation strategy, we propose running a perturbation experiment
as described in this paper to select a kernel and parameter combination that is highly
correlated with the degree of perturbation. This way, the choice of
parameters does not depend on the candidate models but only on the
initial distribution of graphs.
\paragraph{Future work.}
This work gives an overview of the current situation,
illuminates some issues with the status quo, and provides practical
solutions. We hope that this will serve as a starting point for the
community to further develop methods to assess graph generative models, and it is encouraging that some efforts to do so are already underway~\citep{thompson2022on}.
Future work could investigate the use of efficient
graph kernels in combination with MMD, as described in a recent
review~\citep{Borgwardt20}. This would reduce the comparison pipeline in
that graph kernels can be directly used with MMD, making graph
descriptor functions unnecessary. Another approach could be to
investigate \added{alternative or new descriptor functions, such as the geodesic distance, or alternative evaluation methods, such as the multivariate Kolmogorov--Smirnov test \citep{Justel1997}, or even develop totally novel evaluation strategies},
preferably those that go beyond the currently-employed vectorial
representations of graphs. 
%
\section*{Reproducibility Statement}

We have provided the code for our experiments in order to make our work fully reproducible. Details can be found in Appendix~\ref{sec:Experimental Setup}-\ref{sec:Implementation details}, which includes a link to our GitHub repository. During the review process our code is available as a Supplementary Material, in order to preserve anonymity.


\subsubsection*{Acknowledgments}
This work was supported in part by the Alfried Krupp Prize for Young University Teachers of the Alfried Krupp von Bohlen und Halbach-Stiftung (K.B.). 

\bibliography{iclr2022_conference}
\bibliographystyle{iclr2022_conference}

\appendix
\section{Appendix}

The following sections provide additional details about the issues with
existing methods. We also show additional plots from our ranking and
perturbation experiments.

\subsection{Kernels based on total variation distance}\label{sec:Total variation distance}

Previous work used kernels based on the total variation distance in
order to compare evaluation functions via MMD. The choice of this
distance, however, requires subtle changes in the selection of kernels
for MMD---it turns out that the usual RBF kernel must \emph{not} be used
here!

We briefly recapitulate the definition of the total variation distance
before explaining its use in the kernel context:
given two finite-dimensional real-valued histograms $X := \{x_1, \dots,
x_n\}$ and $Y := \{y_1, \dots, y_n\}$, their \emph{total variation
distance} is defined as 
\begin{equation}
  \metric{TV}(X, Y) := \frac{1}{2}\sum_{i=1}^{n}\left|x_i - y_i\right|.
  \label{eq:TV}
\end{equation}
This distance induces a metric space that is not \emph{flat}. In other
words, the metric space induced by \autoref{eq:TV} has non-zero
curvature~(a fact that precludes certain kernels from being used
together with \autoref{eq:TV}). We can formalise this by showing that
the induced metric space \emph{cannot} have a bound on its curvature.
\begin{theorem}
  The metric space $X_{\mathrm{TV}}$ induced by the total variation distance
  between two histograms is not in $\CAT(k)$ for $k > 0$, where $\CAT(k)$ refers to
  the category of metric spaces with curvature bounded from above
  by~$k$~\citep{Gromov87}.
\end{theorem}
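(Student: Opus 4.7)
My strategy is to invoke the standard uniqueness-of-geodesics property of $\CAT(k)$ spaces with $k > 0$: in any such space, every pair of points at distance strictly less than $D_k := \pi/\sqrt{k}$ must be joined by a \emph{unique} geodesic segment (see, e.g., Bridson--Haefliger, \emph{Metric Spaces of Non-Positive Curvature}, Proposition~II.1.4). So it suffices to exhibit, for every sufficiently small $\epsilon > 0$, a pair of histograms at TV distance $\epsilon$ that is joined by more than one geodesic. Since $\epsilon$ can be chosen arbitrarily small, the same family of counterexamples simultaneously rules out $\CAT(k)$ membership for every $k > 0$.

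The explicit construction mirrors the well-known failure of $\ell^1$ to be uniquely geodesic. For $\epsilon > 0$, take the two-bin histograms $x = (0, 0)$, $y = (\epsilon, \epsilon)$, together with the intermediate points $u = (\epsilon, 0)$ and $v = (0, \epsilon)$. A direct calculation from \autoref{eq:TV} yields $\metric{TV}(x, y) = \epsilon$ and $\metric{TV}(x, u) + \metric{TV}(u, y) = \metric{TV}(x, v) + \metric{TV}(v, y) = \epsilon$. The two L-shaped piecewise-linear paths $x \to u \to y$ and $x \to v \to y$ therefore both have length $\epsilon = \metric{TV}(x, y)$, making them both geodesic segments; they are plainly distinct because they pass through different intermediate points. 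To conclude, fix any $k > 0$ and pick $\epsilon < D_k$: the points $x, y$ lie within the uniqueness radius mandated by $\CAT(k)$ yet admit two distinct geodesics, a contradiction.

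The delicate point---rather than a genuine obstacle---is to ensure the L-shaped paths are actual length-minimizers in the intrinsic metric on $X_{\mathrm{TV}}$, not merely curves whose segment-lengths happen to sum to $\metric{TV}(x, y)$. This reduces to the observation that along a coordinate-monotone piecewise-linear path the induced length equals the sum of absolute coordinate increments, so it matches $\metric{TV}$ of the endpoints and the triangle inequality saturates. A secondary point of care is the domain of histograms: in the unnormalized case the construction above is immediate, while for normalized histograms on the probability simplex the same idea works with three-bin examples such as $x = (1, 0, 0)$, $y = (1 - \epsilon, \epsilon/2, \epsilon/2)$, $u = (1 - \epsilon/2, \epsilon/2, 0)$, $v = (1 - \epsilon/2, 0, \epsilon/2)$, for which the analogous computation gives two distinct length-$\epsilon$ geodesics between $x$ and $y$.
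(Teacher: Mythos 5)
Your proof is correct and follows the same core strategy as the paper's own proof: exhibit two distinct L-shaped geodesics between a pair of histograms---exploiting the $\ell^1$-like structure of the TV distance---and then invoke uniqueness of geodesics in $\CAT(k)$ spaces to derive a contradiction. The paper does this with the fixed pair $x_1 = (1, 0, \dots, 0)$, $x_2 = (0, 1, 0, \dots, 0)$ at distance~$1$, joined once through $(0, \dots, 0)$ and once through $(1, 1, 0, \dots, 0)$. Where you genuinely differ---and in fact improve on the paper---is in handling the distance threshold: uniqueness of geodesic segments in a $\CAT(k)$ space with $k > 0$ is only guaranteed for pairs of points at distance strictly less than $D_k = \pi/\sqrt{k}$, so the paper's unit-distance example only contradicts $\CAT(k)$ membership for $k < \pi^2$; for $k \geq \pi^2$ the cited proposition is silent about points at distance~$1$, leaving a gap relative to the theorem as stated (``for all $k>0$''). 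Your scaled family of counterexamples, with $\metric{TV}(x,y) = \epsilon < D_k$ chosen after $k$ is fixed, closes exactly this gap and proves the full statement. Two further points of care on your side that the paper asserts without verification: you check that the L-shaped paths are genuine geodesics (arc-length parametrized isometric embeddings, so that path distance between any two points on the path equals their TV distance, not merely that the two segment lengths sum to $\metric{TV}(x,y)$), and you note that the construction survives restriction to normalized histograms on the probability simplex, which is the setting actually relevant to the degree and clustering-coefficient descriptors used in the paper.
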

\begin{proof}
  Let $x_1 = (1, 0, \dots, 0)$ and $x_2 = (0, 1, 0, \dots, 0)$. There
  are at least two geodesics---shortest paths---of the same length, one that first
  decreases the first coordinate and subsequently increases the second
  one, whereas for the second geodesic this order is switched. More
  precisely, the first geodesic proceeds from $x_1$ to $x_1
  - (\epsilon, 0, \dots, 0)$ for an infinitesimal $\epsilon > 0$,
  until $x_0 = (0, \dots, 0)$ has been reached. Following this, the geodesic
  continues from $x_0$ to $x_0 + (0, \epsilon, 0, \dots, 0)$ in
  infinitesimal steps until $x_2$ has been reached. The order of these
  two operations can be switched, such that the geodesic goes from
  $x_1$ to $(1, 1, 0, \dots, 0)$, from which it finally continues to
  $x_2$. Both of these geodesics have a length of $1$.
  Since geodesics in a $\CAT(k)$ space for $k > 0$ are
  \emph{unique}~\citep[Proposition~2.11, p.~23]{Bridson99},
  $X_{\mathrm{TV}}$ is not in $\CAT(k)$ for $k >0$.
\end{proof}
Since every $\CAT(k)$ space is also a $\CAT(l)$ space for all $l > k$,
this theorem has the consequence that $X_{\mathrm{TV}}$ cannot be
a $\CAT(0)$ space. Moreover, as every \emph{flat} metric space is
in particular a $\CAT(0)$ space, $X_{\mathrm{TV}}$ is not flat.
According to Theorem~1 of \citet{Feragen15}, the associated geodesic
Gaussian kernel, i.e.\ the kernel that we obtain by writing
\begin{equation}
  \kernel(x, y) := \exp\mleft(-\frac{\metric{TV}(X, Y)^2}{2\sigma^2}\mright),
\end{equation}
is \emph{not} positive definite and should therefore not be used with
MMD.
One potential fix for this specific distance involves using the
Laplacian kernel, i.e.\
\begin{equation}
  \kernel(x, y) := \exp\mleft(-\lambda\metric{TV}\mleft(X, Y\mright)\mright).
\end{equation}
The subtle difference between these kernel functions---only
an exponent is being changed---demonstrate that care must be taken when
selecting kernels for use with MMD.

\subsection{Experimental setup}\label{sec:Experimental Setup}

We can analyse the desiderata outlined above using an experimental
setting. In the following, we will assess expressivity, robustness, and
efficiency for a set of common perturbations, i.e.\
\begin{inparaenum}[(i)]
  \item random edge insertions, \label{item:Insertion} 
  \item random edge deletions, \label{item:Deletion}
  \item random rewiring operations, i.e.\ `swapping' edges\label{item:Rewiring}, and
  \item random node additions\label{item:Node addition}.
\end{inparaenum}
For each perturbation type, we will investigate how the metric changes for
an ever-increasing degree of perturbation, where each perturbation is
parametrized by at least one parameter. When removing edges, for instance,
this is the probability of removing an edge in the graph.  Thus for a graph
with $100$ edges and $p_\text{remove}=0.1$ we would expect on average $90$ edges to
remain in the graph.  Similar parametrizations apply for the other
perturbations, for anmore detailed description we refer to Appendix~\ref{sec:Details graph perturbations}.
All these operations are inherently small-scale~(though
not necessarily localised to specific regions within a graph), but turn
into large-scale perturbations of a graph depending on the strength of
the perturbation performed. 
We performed these perturbations using our own \texttt{Python}-based
framework for graph generative model comparison and evaluation. Our
framework additionally permits the simple integration of additional descriptor
functions and evaluators.

\subsection{Details on graph perturbations}\label{sec:Details graph perturbations}
We describe all graph perturbations used in this work on the example of a single graph $G:= (V, E)$ where $V$
refers to the vertices of the graph and $E$ to the edges.

\paragraph{Add Edges}
For each $v_i, v_j \in V$ with $v_i \neq v_j$ a sample from a Bernoulli distribution $x_{ij} \sim
\text{Ber}(p_\text{add})$ is drawn. Samples for which $x_{ij}=1$ are added to the list of edges such that $E' = E \cup
\{(v_i, v_j) \mid x_{ij} = 1\}$.

\paragraph{Remove Edges}
For each $e_i \in E$, a sample from a Bernoulli distribution $x_i \sim \text{Ber}(p_\text{remove})$ is drawn, and samples
with $x_i = 1$ are removed from the edge list, such that $E' = E \cap \{e_i \mid x_i \neq 1\}$.

\paragraph{Rewire Edges}
For each $e_i \in E$, a sample from a Bernoulli distribution $x_i \sim \text{Ber}(p_\text{rewire})$ is drawn, and samples
with $x_i = 1$ are rewired.  For rewiring a further random variable $y_i \sim \text{Ber}(0.5)$ is drawn which determines
which node $e_i[y_i]$ of the edge $e_i$ is kept.  The node to which the edge is connected is chosen uniformly from the
set of vertices $v_i \in V$, where $v_i \notin e_i$, i.e. avoiding self-loops and reconnecting the original edge.
Finally the original edge is removed and the new edge $e'_i = (e_i[y_i], v_i)$ is added to the graph $E' = E \cap \{e_i \mid
x_i \neq 1\} \cup \{e'_i \mid x_i = 1\}$.

\paragraph{Add Connected Node}
We define a set of vertices to be added $V^* = \{v_i \mid |V| < i \leq |V|+n\}$, where $n$ represents the number of nodes
to add.  For each $v_i \in V$ and $v_j \in V^*$ we draw a sample from a Bernoulli distribution $x_{ij} \sim
\text{Ber}(p_\text{connect\_node})$ and an edge between $v_i$ and $v_j$ to the graph if $x_{ij} = 1$. Thus $E' = E \cup
\{(v_i, v_j) \mid v_i \in V, v_j \in V^*, x_{ij} = 1\}$.

\subsection{Implementation details}\label{sec:Implementation details}

We used the official implementations of GraphRNN, GRAN and Graph Score
Matching in our experiments. GraphRNN and GRAN both have an MIT License,
and Graph Score Matching is licensed under GNU General Public License
v3.0.
Our code is available at~(\url{https:/www.github.com/BorgwardtLab/ggme}) under a BSD 3-Clause license.  

\paragraph{Compute resources.}
All the jobs were run on our internal cluster, comprising 64 physical
cores~(\texttt{Intel(R) Xeon(R) CPU E5-2620 v4 @ 2.10GHz}) with 8
GeForce GTX 1080 GPUs. We stress that the main component of this paper,
i.e.\ the evaluation itself, do \emph{not} necessarily
require a cluster environment. The cluster was chosen because individual
generative models had to be trained in order to obtain generated graphs,
which we could subsequently analyse and rank.

\subsection{Speed up trick}\label{sec:Speed up}
%
As the combination of kernels and hyperparameters yielded hundreds of
combinations, it is worth mentioning a worthwhile speedup trick to
reduce the complexity of assessing the Gaussian and Laplacian kernel
combinations. Considering they have a shared intermediate value, namely
the Euclidean distance, it is possible to return intermediate values in
the MMD computation ($K_{XX}$, $K_{YY}$, and $K_{XY}$, prior to
exponentiation, potential squaring, and scaling by $\sigma$. Storing
these intermediate results allows one to rapidly iterate over a grid of
values for $\sigma$ without needed to recalculate MMD, leading to
a worthwhile speedup.

\subsection{Experimental results}\label{sec:Full perturbation results}

We now provide the results presented in the paper across all the datasets.

\begin{figure}[h!]
    \centering
    \begin{subfigure}[b]{0.24\textwidth}
    \includegraphics[width=\textwidth]{Figures/kernel_choice/hyperparameter_heatmap_BarabasiAlbertGraphs_clustering_gaussian.pdf}
    \caption{Barab\'{a}si-Albert Graphs}
    \end{subfigure}
    \hfill
    \begin{subfigure}[b]{0.24\textwidth}
    \centering
    \includegraphics[width=\textwidth]{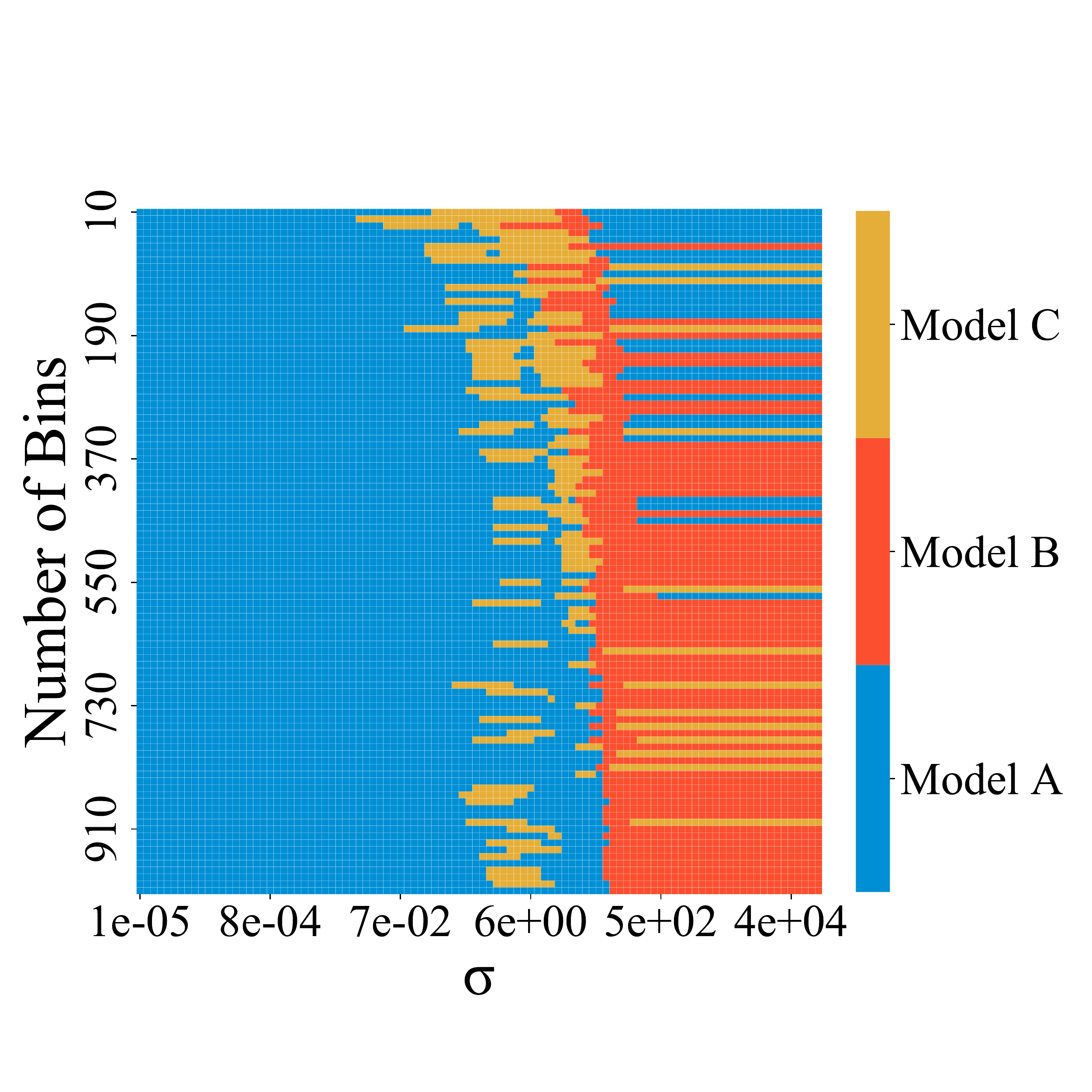}
    \caption{Community Graphs}
    \end{subfigure}
    \hfill
    \begin{subfigure}[b]{0.24\textwidth}
    \centering
    \includegraphics[width=\textwidth]{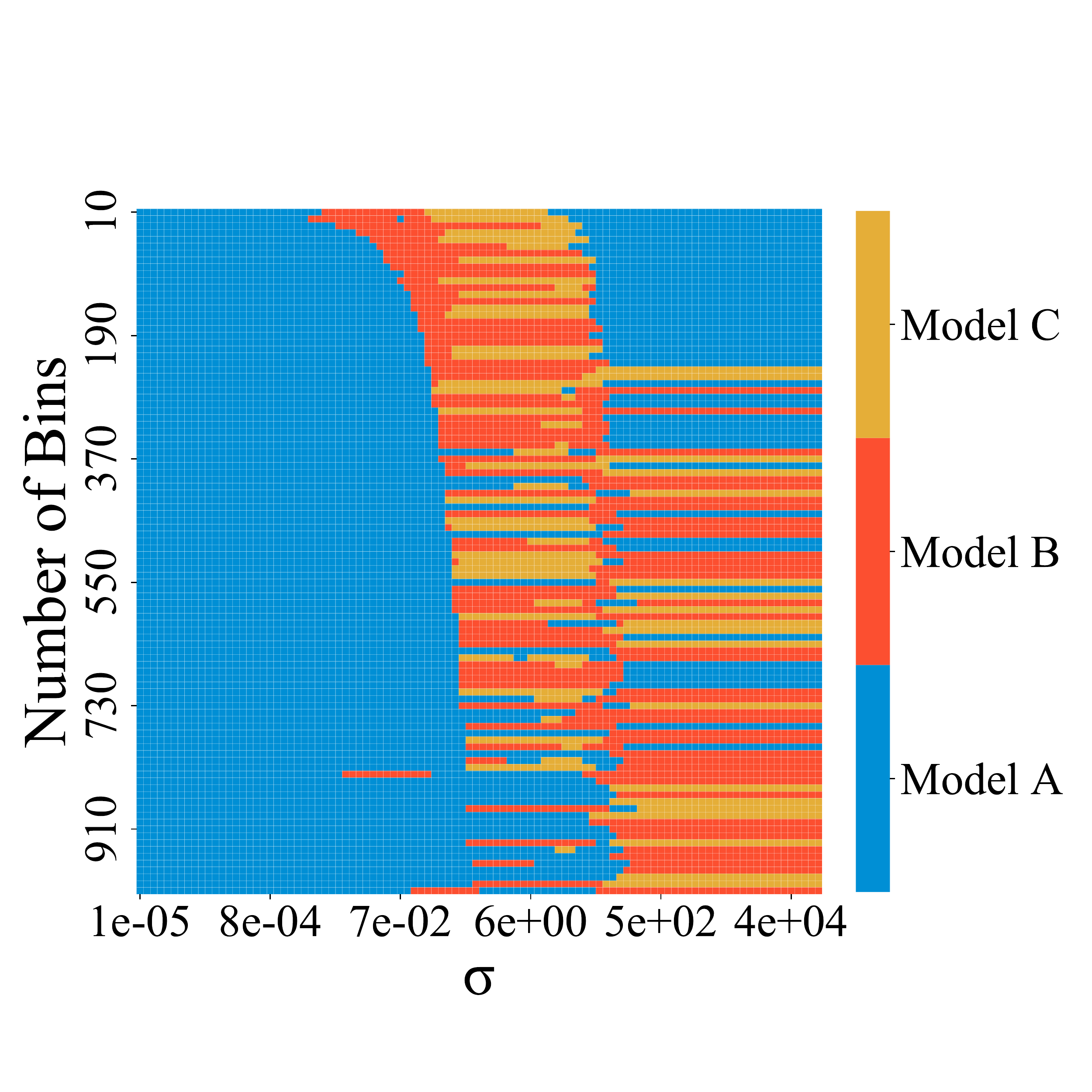}
    \caption{Erd\"{o}s-R\'{e}nyi Graphs}
    \end{subfigure}
    \hfill
    \begin{subfigure}[b]{0.24\textwidth}
    \centering
    \includegraphics[width=\textwidth]{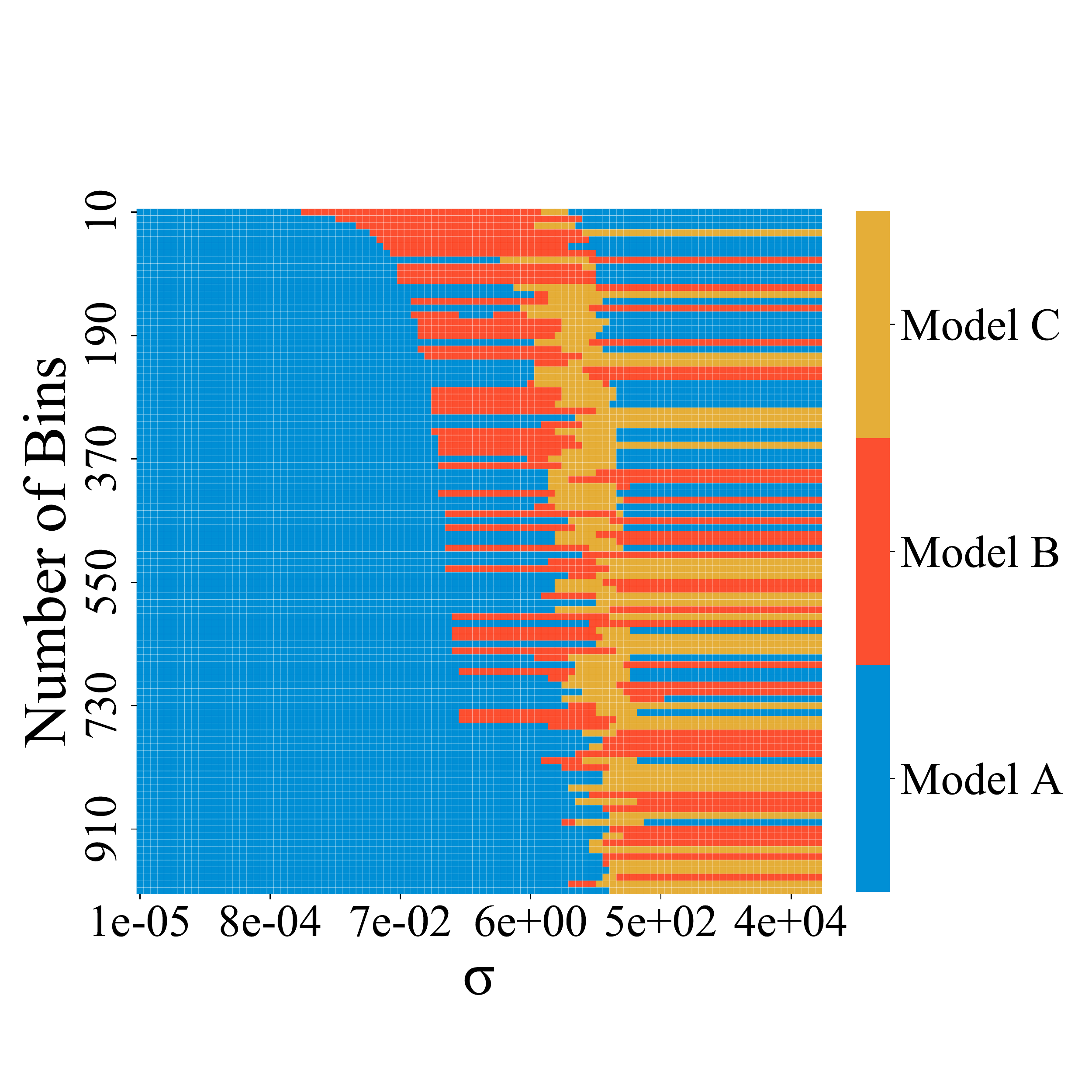}
    \caption{Watts-Strogatz Graphs}
    \end{subfigure}
    \caption{A heatmap of which model (from A, B, and C) ranks first in terms of MMD across different hyperparameter combinations. This uses the clustering coefficient descriptor function and the RBF kernel, where the number of bins is a hyperparameter of the descriptor function, and $\sigma$ is the hyperparameter in the kernel.}
    \end{figure}

\begin{figure}
    \centering
    \includegraphics[width=\textwidth]{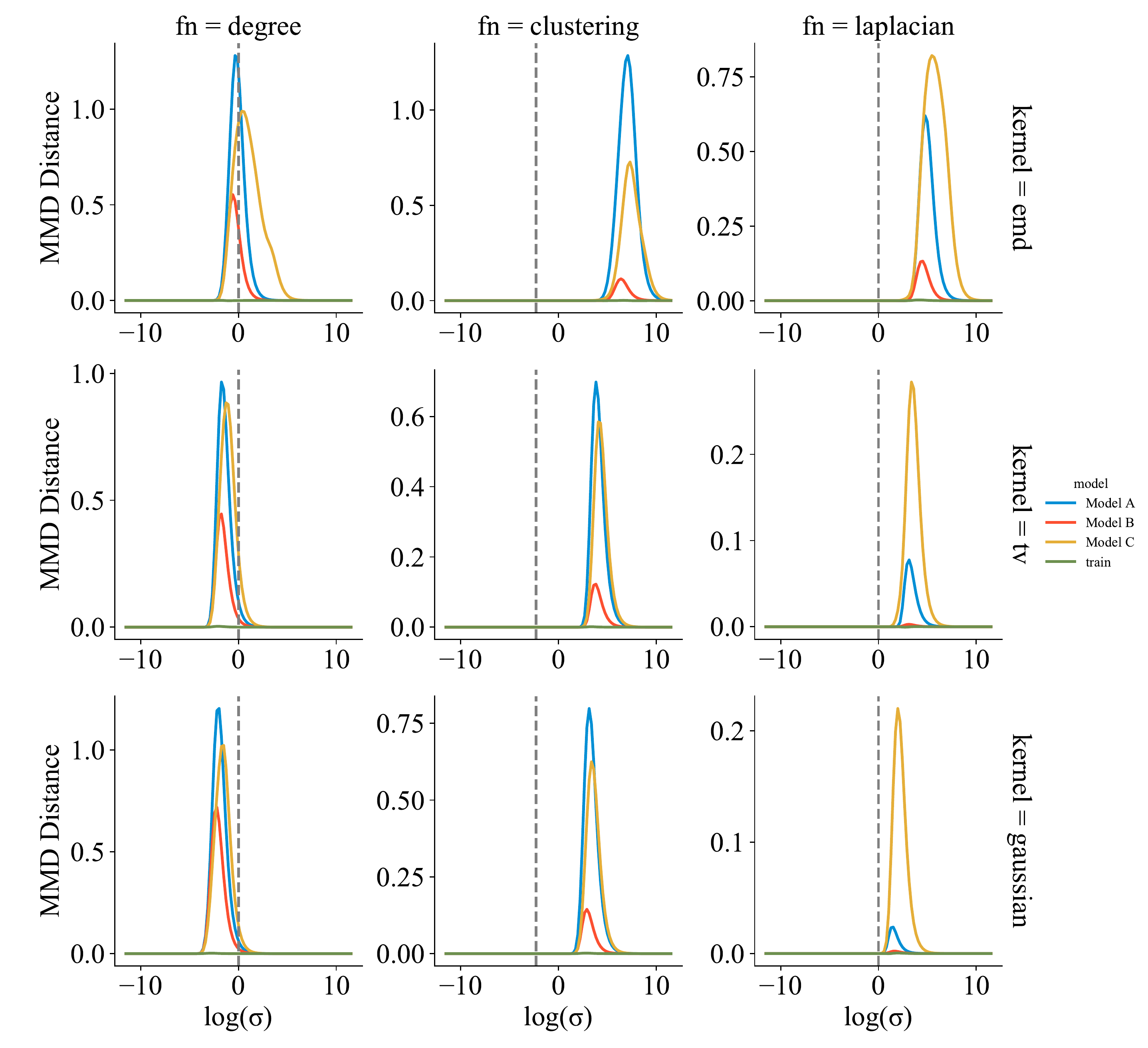}
    \caption{\textbf{Barab\'{a}si-Albert Graphs}. MMD calculated between the test graphs and predictions from three recent graph generative models (A, B, C) over a range of values of $\sigma$ on the Barab\'{a}si-Albert Graphs dataset. Additionally, the MMD distance between the test graphs and training graphs is provided to give a meaningful sense of scale to the metric. It provides an idea of what value of MMD signifies an indistinguishable difference between the two distributions.}
    \label{fig:kernel_vs_fn_BA}
\end{figure}

\begin{figure}
    \centering
    \includegraphics[width=\textwidth]{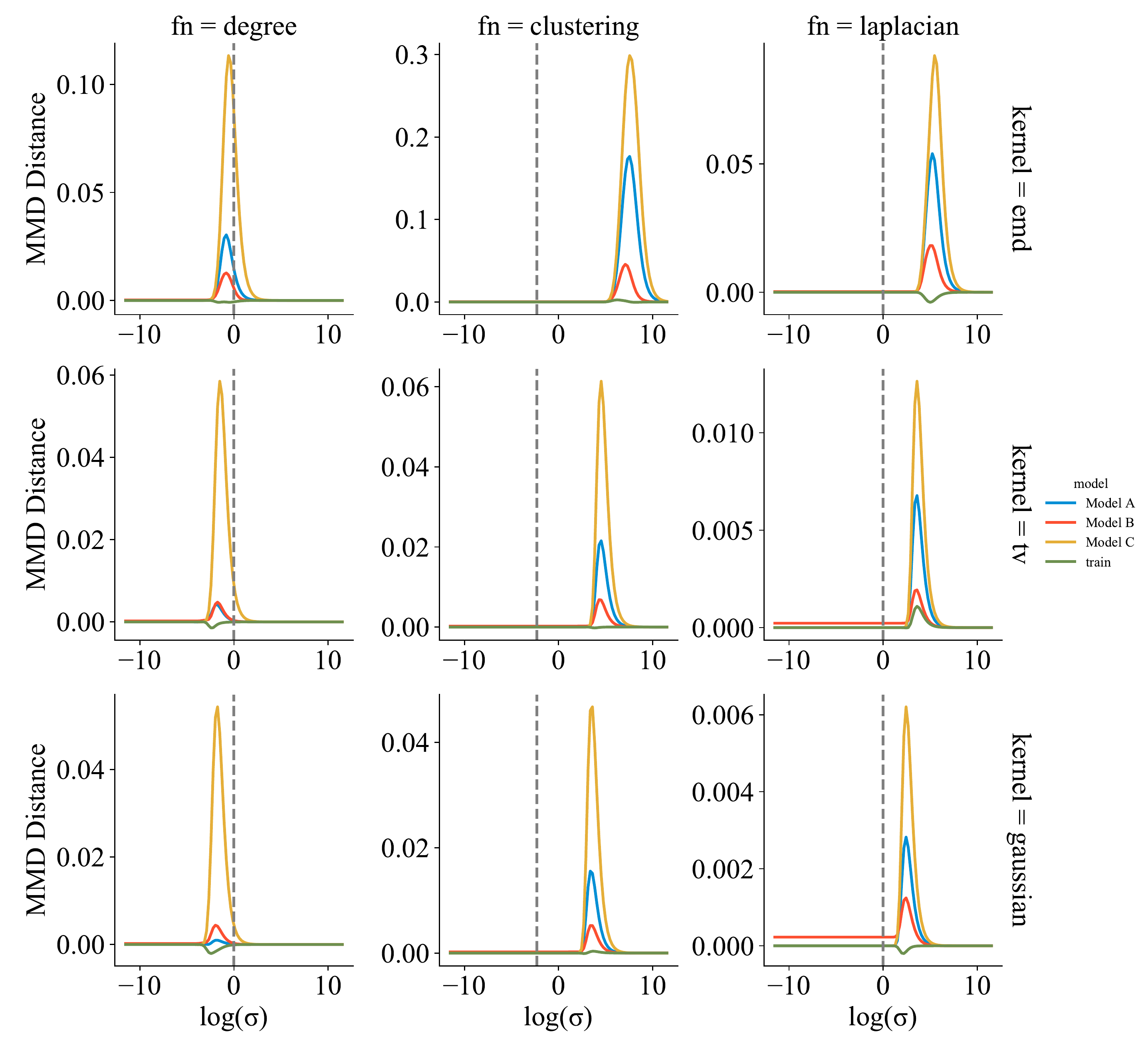}
    \caption{\textbf{Community Graphs}. MMD calculated between the test graphs and predictions from three recent graph generative models (A, B, C) over a range of values of $\sigma$ on the Community Graphs dataset. Additionally, the MMD distance between the test graphs and training graphs is provided to give a meaningful sense of scale to the metric. It provides an idea of what value of MMD signifies an indistinguishable difference between the two distributions. }
    \label{fig:kernel_vs_fn_CG}
\end{figure}

\begin{figure}
    \centering
    \includegraphics[width=\textwidth]{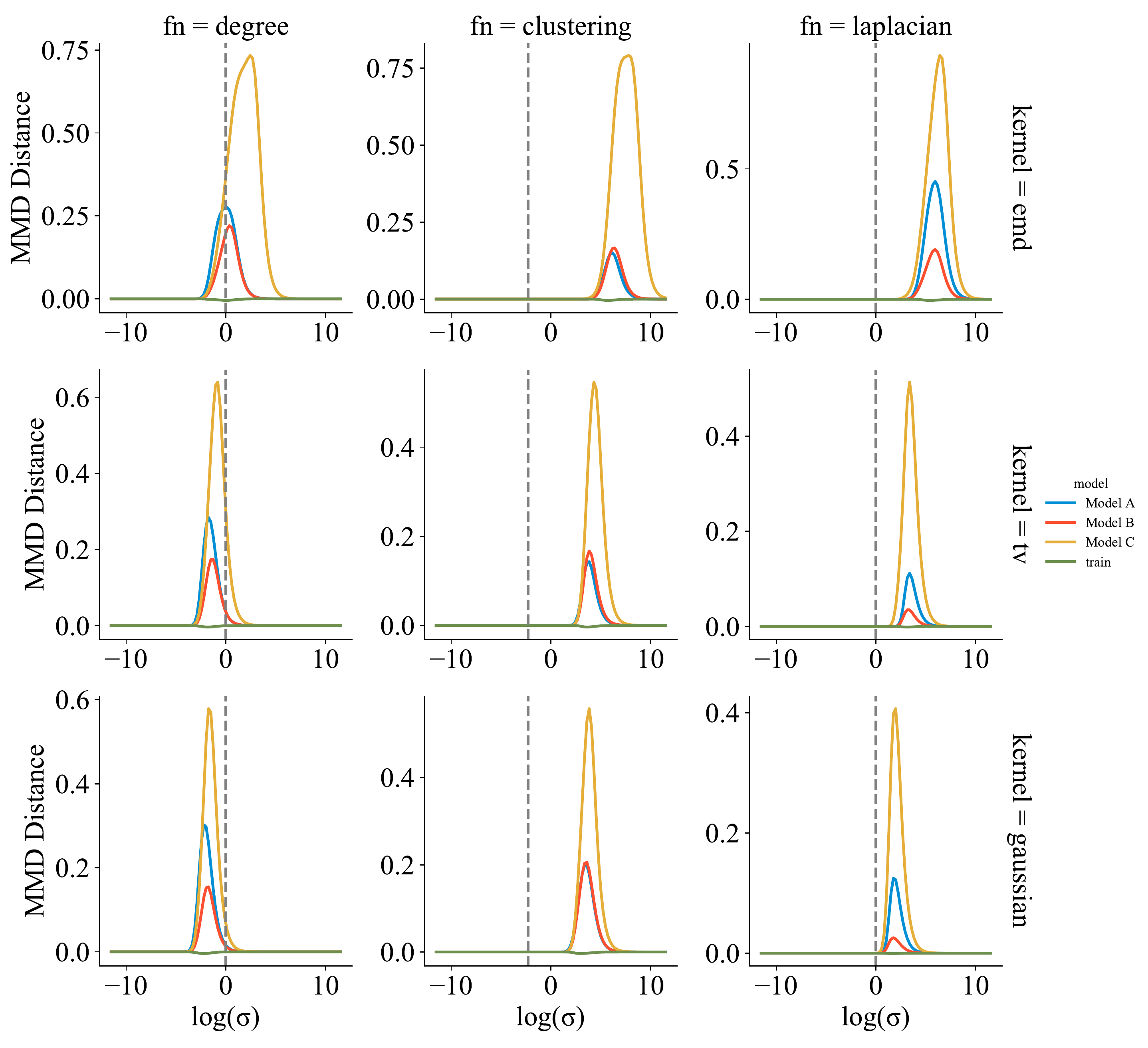}
    \caption{\textbf{Erd\"{o}s-R\'{e}nyi Graphs}. MMD calculated between the test graphs and predictions from three recent graph generative models (A, B, C) over a range of values of $\sigma$ on the Erd\"{o}s-R\'{e}nyi Graphs dataset. Additionally, the MMD distance between the test graphs and training graphs is provided to give a meaningful sense of scale to the metric. It provides an idea of what value of MMD signifies an indistinguishable difference between the two distributions. }
    \label{fig:kernel_vs_fn_ER}
\end{figure}

\begin{figure}
    \centering
    \includegraphics[width=\textwidth]{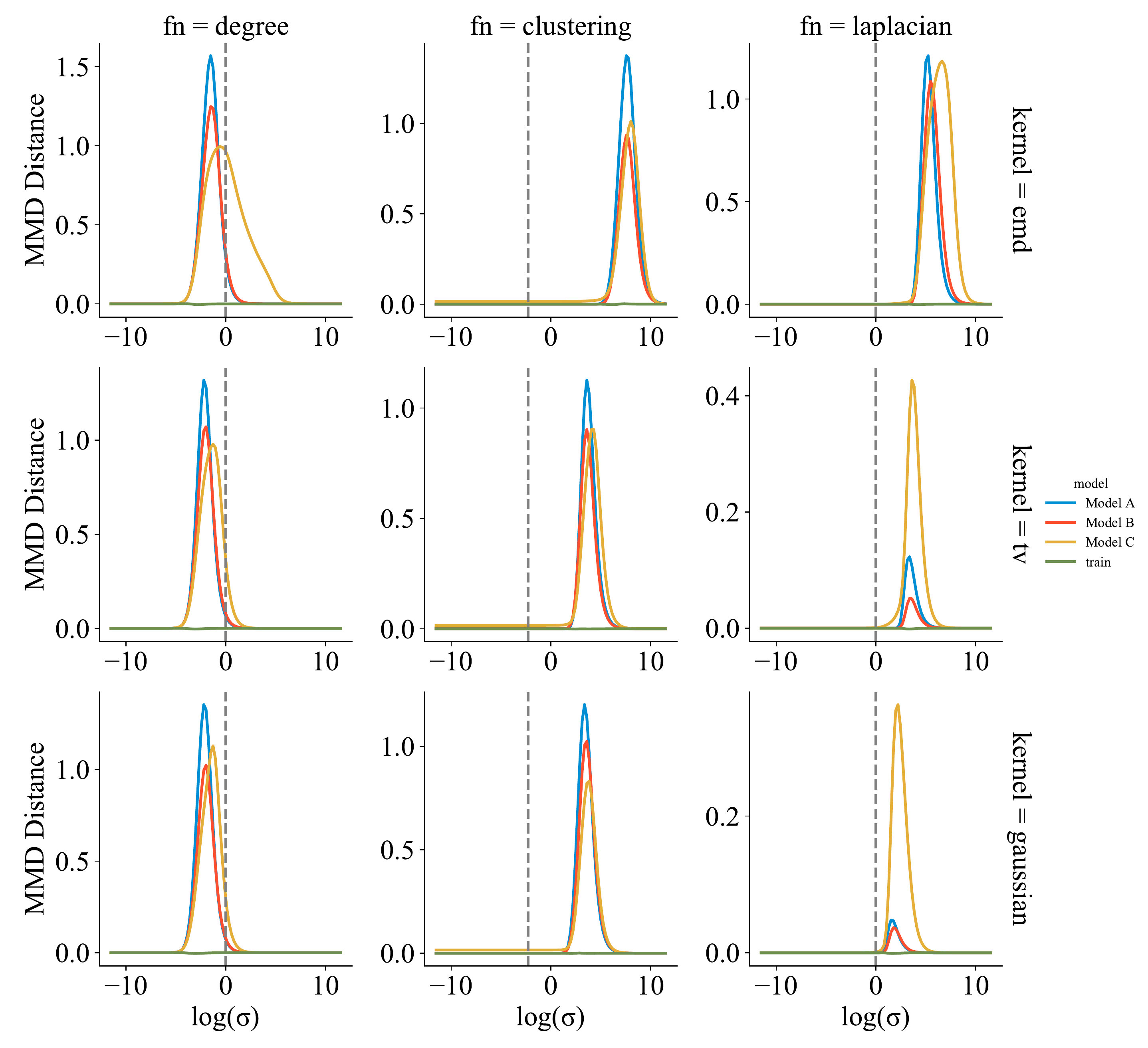}
    \caption{\textbf{Watts-Strogatz Graphs}. MMD calculated between the test graphs and predictions from three recent graph generative models (A, B, C) over a range of values of $\sigma$ on the Watts-Strogatz Graphs dataset. Additionally, the MMD distance between the test graphs and training graphs is provided to give a meaningful sense of scale to the metric. It provides an idea of what value of MMD signifies an indistinguishable difference between the two distributions.}
    \label{fig:kernel_vs_fn_WS}
\end{figure}



\begin{figure}[tbp]
    \centering
    \resizebox{\textwidth}{!}{%
        \includegraphics[]{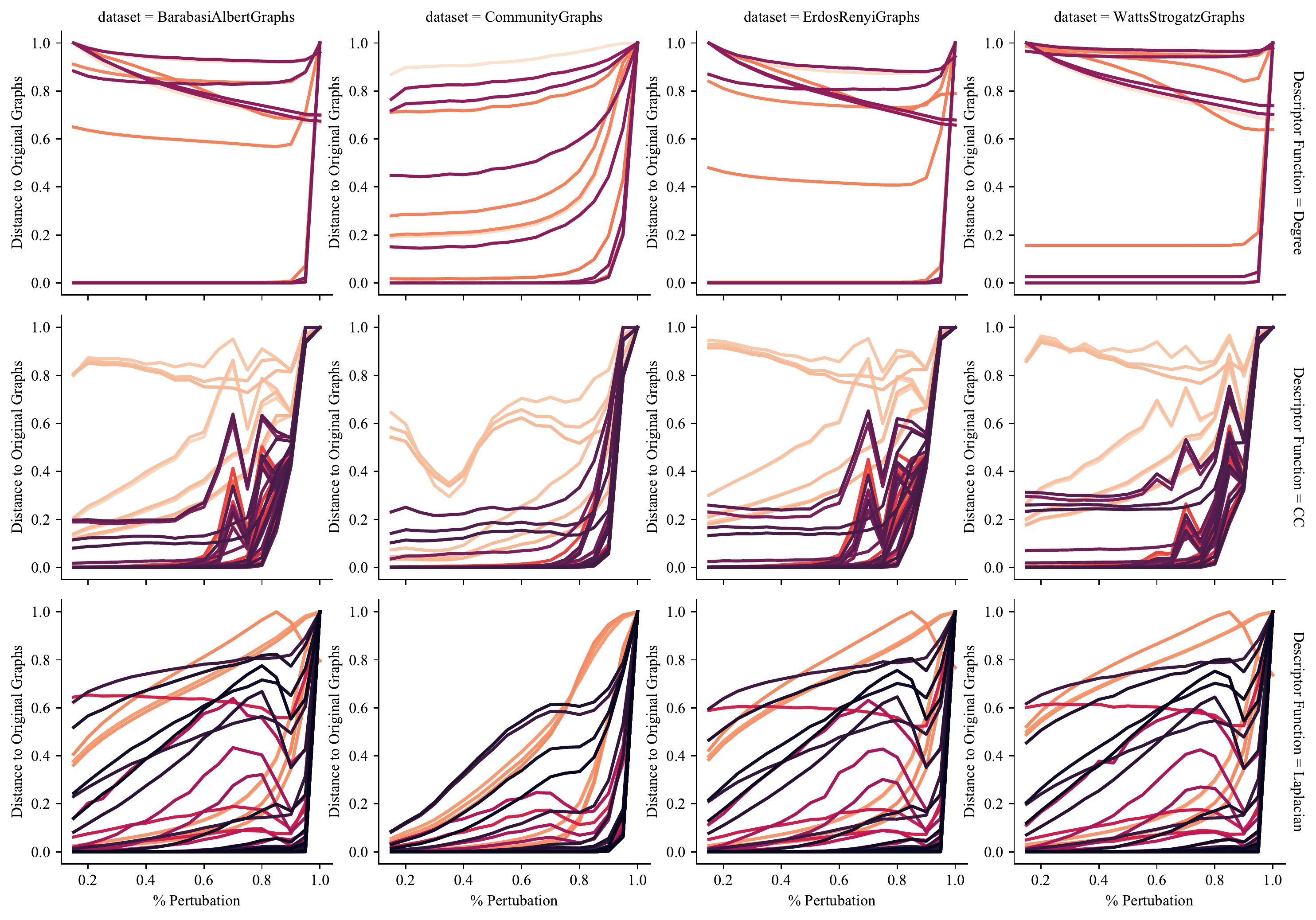}
        }%
    
    \caption{%
      \textbf{Perturbation: adding edges}. This figure shows the full results across datasets, descriptor functions and parameters when the perturbation is adding edges to the graphs. At each level of perturbation, the distance of the perturbed graphs is calculated to the original graph distribution using the specified evaluator function. Each line represents a different parameter combination. An ideal evaluator function would monotonically increase as the degree of perturbation increases.}
    
    \label{fig:Full perturbation results: MMD, Adding Edges}
\end{figure}


\begin{figure}[tbp]
    \centering
    \resizebox{\textwidth}{!}{%
        \includegraphics[]{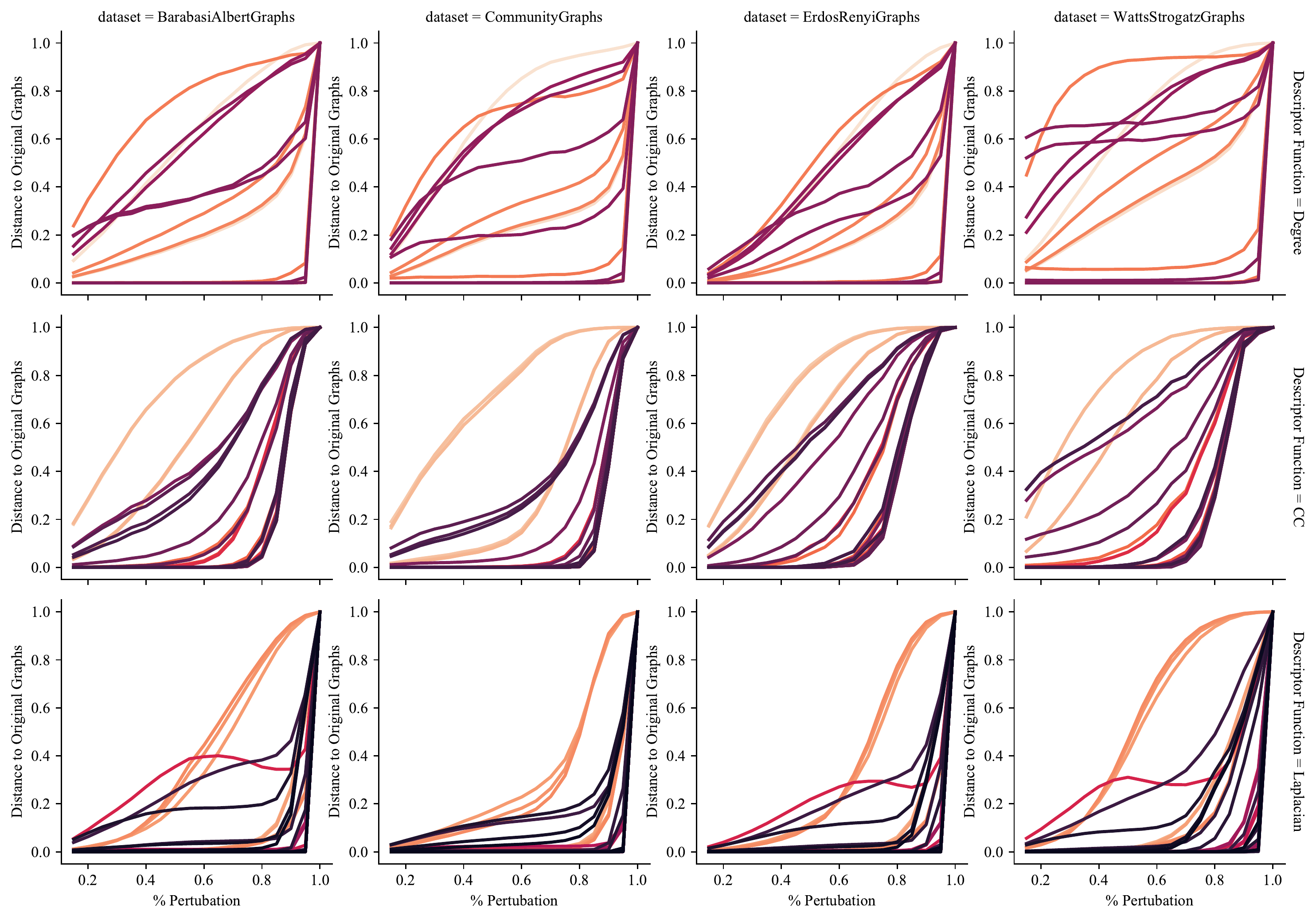}
        }%
    
    \caption{%
      \textbf{Perturbation: removing edges}. This figure shows the full results across datasets, descriptor functions and parameters when the perturbation is removing edges from the graphs. At each level of perturbation, the distance of the perturbed graphs is calculated to the original graph distribution using the specified evaluator function. Each line represents a different parameter combination. An ideal evaluator function would monotonically increase as the degree of perturbation increases.
      }
    
    \label{fig:Full perturbation results: MMD, Removing Edges}
\end{figure}


\begin{figure}[tbp]
    \centering
    \resizebox{\textwidth}{!}{%
        \includegraphics[]{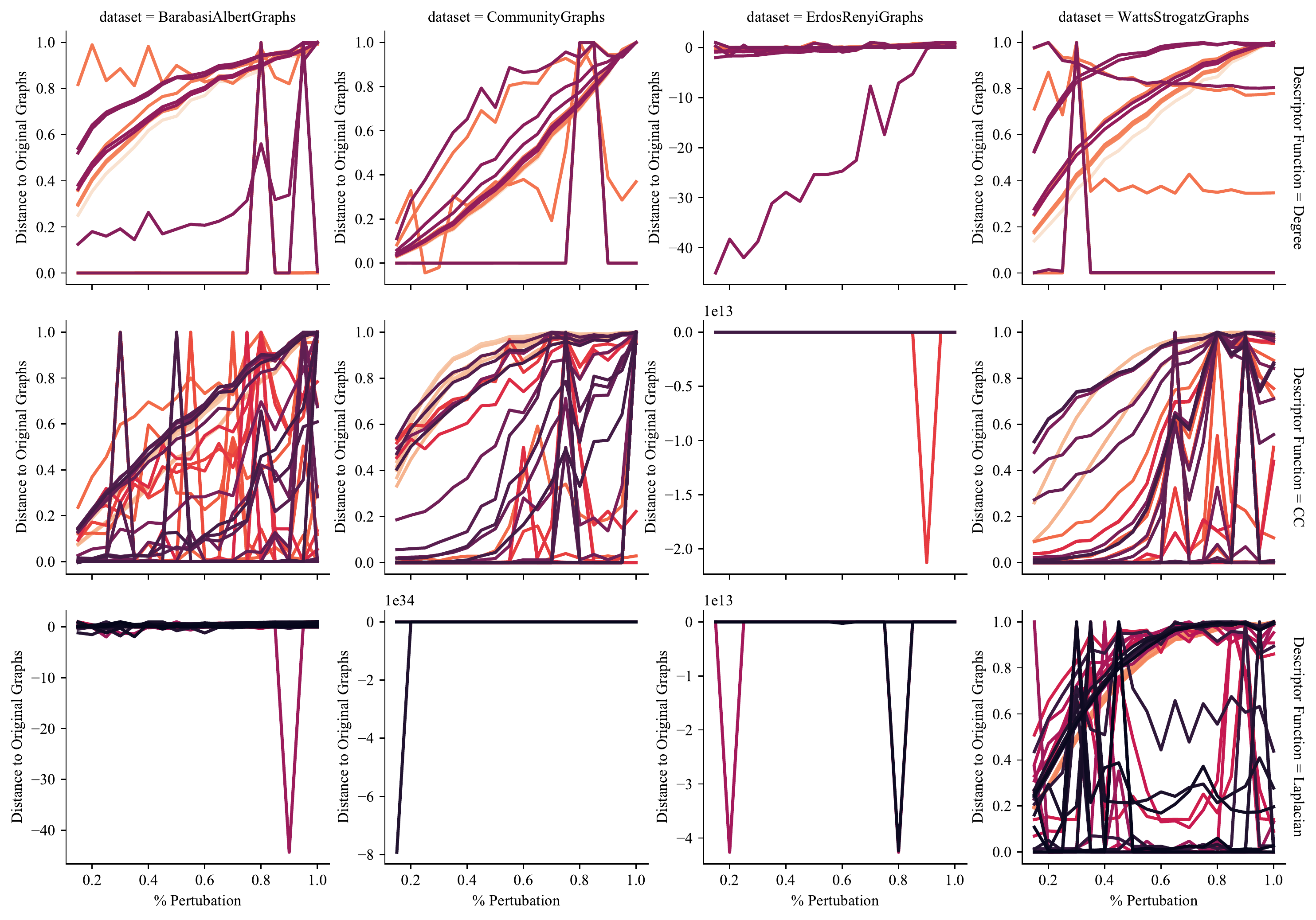}
        }%
    
    \caption{%
      \textbf{Perturbation: rewiring edges}. This figure shows the full results across datasets, descriptor functions and parameters when the perturbation is rewiring edges in the graphs. At each level of perturbation, the distance of the perturbed graphs is calculated to the original graph distribution using the specified evaluator function. Each line represents a different parameter combination. An ideal evaluator function would monotonically increase as the degree of perturbation increases.
      }
    
    \label{fig:Full perturbation results: MMD, Rewiring Edges}
\end{figure}


\begin{figure}[tbp]
    \centering
    \resizebox{\textwidth}{!}{%
        \includegraphics[]{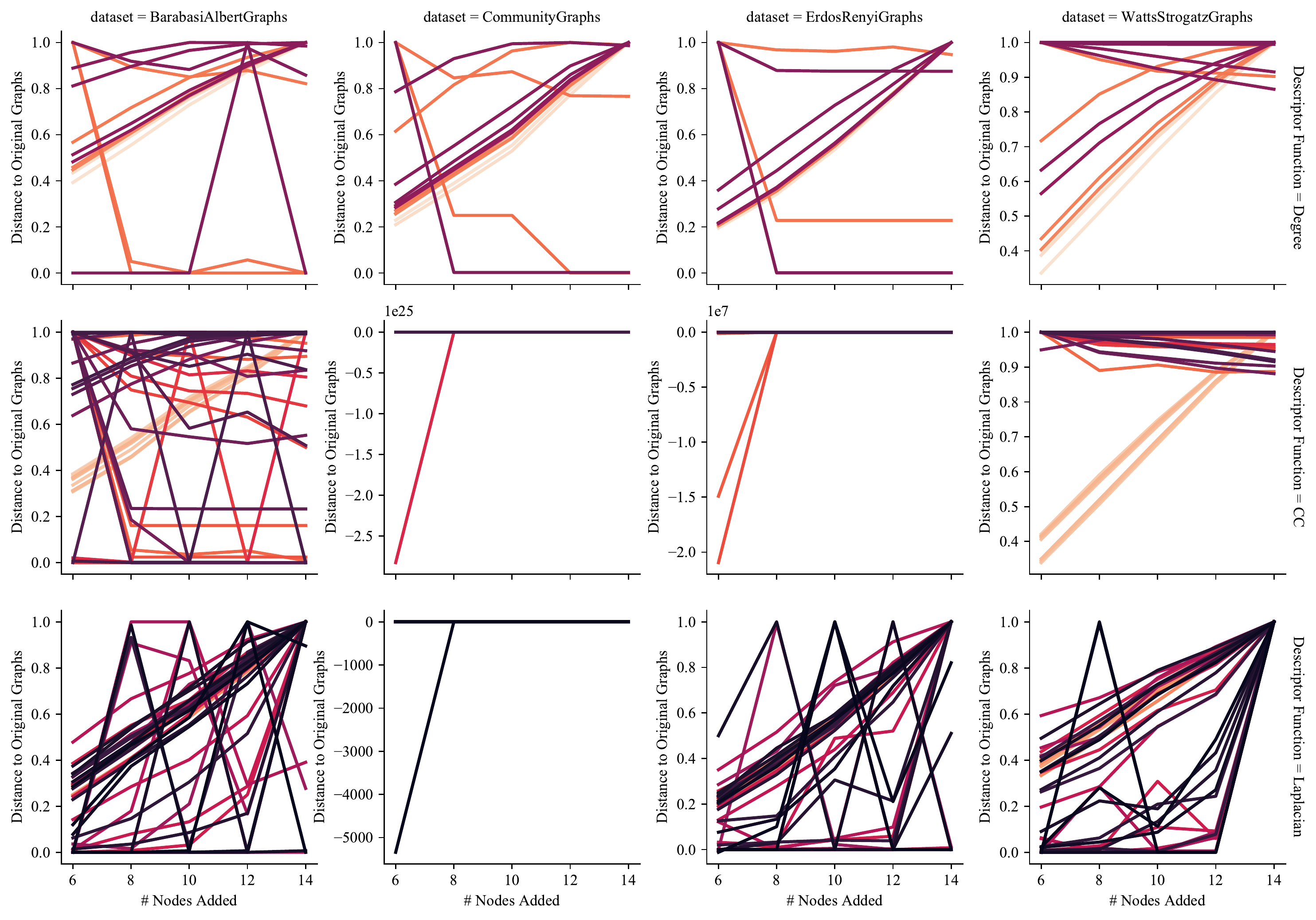}
        }%
    
    \caption{%
      \textbf{Perturbation: adding connected nodes}. This figure shows the full results across datasets, descriptor functions and parameters when the perturbation is adding connected nodes to the graphs (for each node that is added, there is a 15\% chance the node will be connected to any other node in the graph). At each level of perturbation, the distance of the perturbed graphs is calculated to the original graph distribution using the specified evaluator function. Each line represents a different parameter combination. An ideal evaluator function would monotonically increase as the degree of perturbation increases.
      }
    
    \label{fig:Full perturbation results: MMD, Add Connected Nodes}
\end{figure}

\clearpage
\subsection{\added{Alternative measures of correlation}}\label{sec:Alternative correlation measures}

\added{As a general recommendation, we chose to use the Pearson correlation coefficient to select a good kernel-hyperparameter combination, due to its simplicity and ease of interpretation. It reflects the behavior we expect from perturbations: as the graphs are increasingly perturbed, the distance to the original graphs should grow in a similar manner as well. However, there could be scenarios in which the distance to the original graphs should not grow linearly with the degree of perturbation, in which case the Pearson correlation coefficient would not be the best choice. We present here two plug-in alternatives for the Pearson correlation coefficient, namely the Spearman rank correlation coefficient, and mutual information, which can easily be integrated into our framework. We add one word of caution when using the mutual information, which is the fact that it does not capture the directionality of dependence. This is only an issue if there are scenarios in which the MMD distance \emph{decreases} as the degree of perturbation increases. Since we observed this in some of our datasets, it would not be the most appropriate to use in this specific case. We now  present the results from using these measures of dependence below.}

\begin{figure}[h!]
    \centering
    
    \begin{subfigure}[b]{0.23\textwidth}
        \includegraphics[height=4cm]{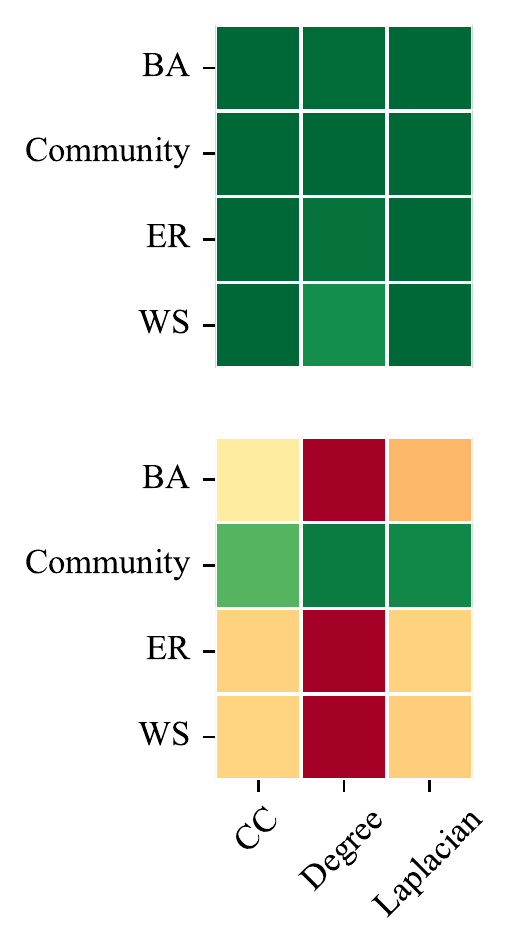}
        \caption{AddEdges}
    \end{subfigure}%
    \begin{subfigure}[b]{0.23\textwidth}
        \includegraphics[height=4cm]{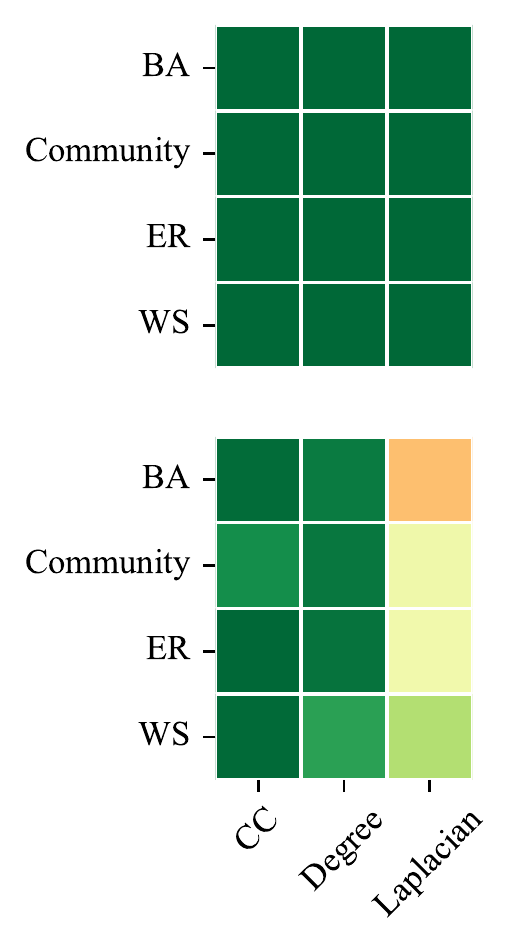}
        
        \caption{RemoveEdges}
    \end{subfigure}%
    \begin{subfigure}[b]{0.23\textwidth}
        \includegraphics[height=4cm]{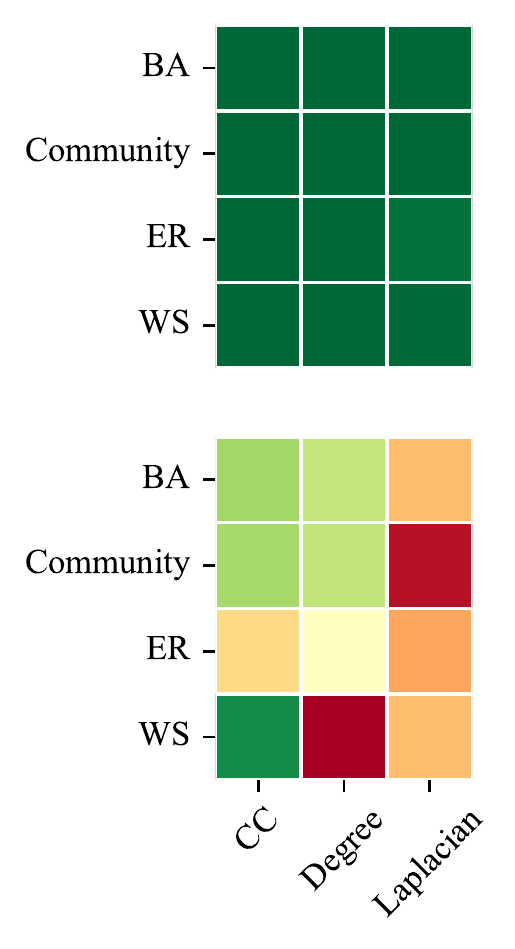}
        \caption{RewireEdges}
    \end{subfigure}
    \begin{subfigure}[b]{0.23\textwidth}
        \includegraphics[height=4cm]{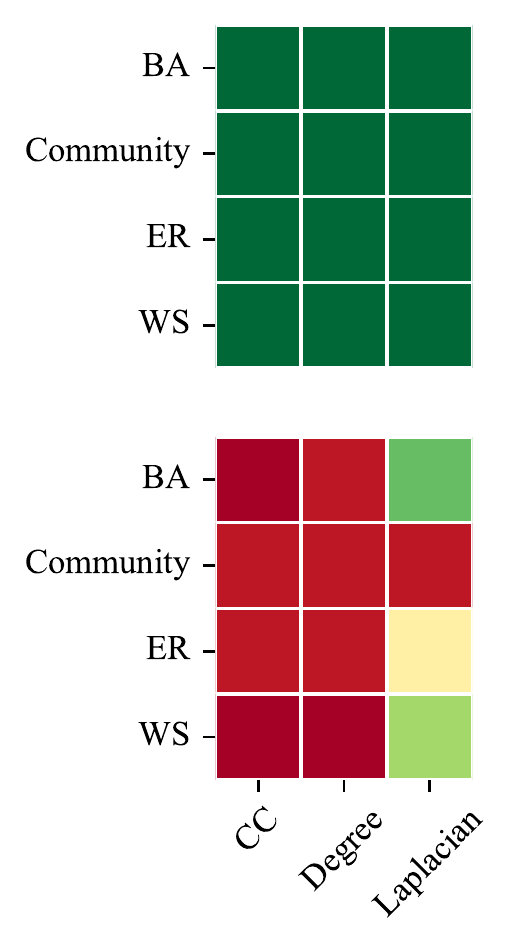}
        \caption{AddConnectedNodes}
    \end{subfigure}
    \begin{subfigure}[b]{0.03\textwidth}
        \includegraphics[height=4cm]{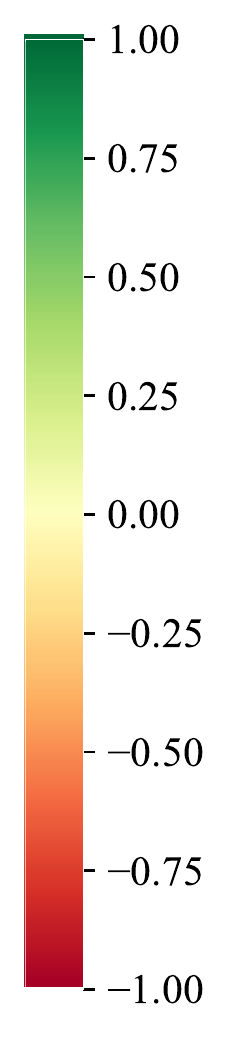}
        \caption*{}
    \end{subfigure}
    \caption{%
      \added{The Spearman rank correlation of MMD with
      the degree of perturbation in the graph, assessed for different
      descriptor functions and datasets (BA: Barab\'{a}si-Albert Graphs, ER: Erd\"{o}s-R\'{e}nyi Graphs, WS: Watts-Strogatz Graphs). For an ideal metric, the
      distance would increase as the degree of perturbation increases;
      resulting in a correlation close to 1. The upper row shows the
      \emph{best} kernel-parameter combination in terms of the correlation; the bottom row shows the \emph{worst}. As we can see, a proper kernel and parameter selection leads to strong correlation to the perturbation, but a bad choice  can lead to inverse correlation, highlighting the importance of a good kernel/parameter combination. }
    }
    \label{fig:Spearman Perturbation Heatmap}
\end{figure}

\begin{figure}[h!]
    \centering
    
    \begin{subfigure}[b]{0.23\textwidth}
        \includegraphics[height=4cm]{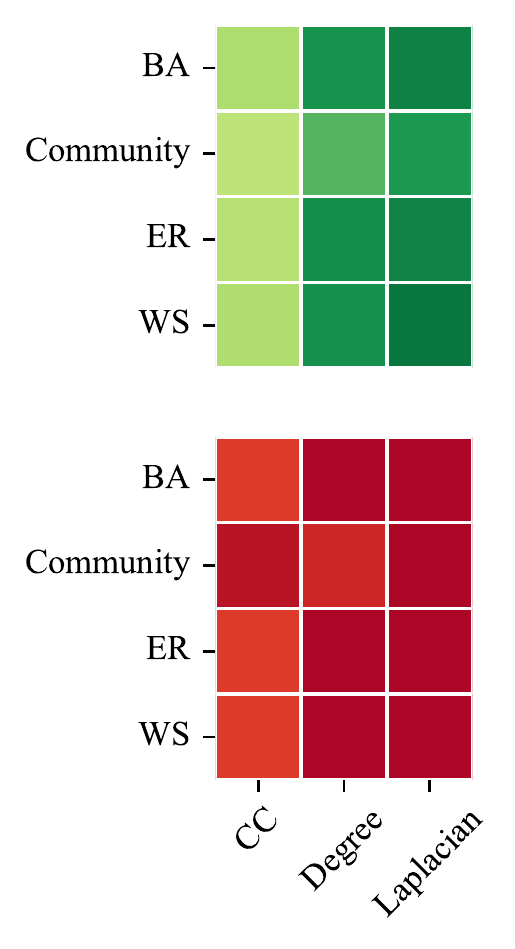}
        \caption{AddEdges}
    \end{subfigure}%
    \begin{subfigure}[b]{0.23\textwidth}
        \includegraphics[height=4cm]{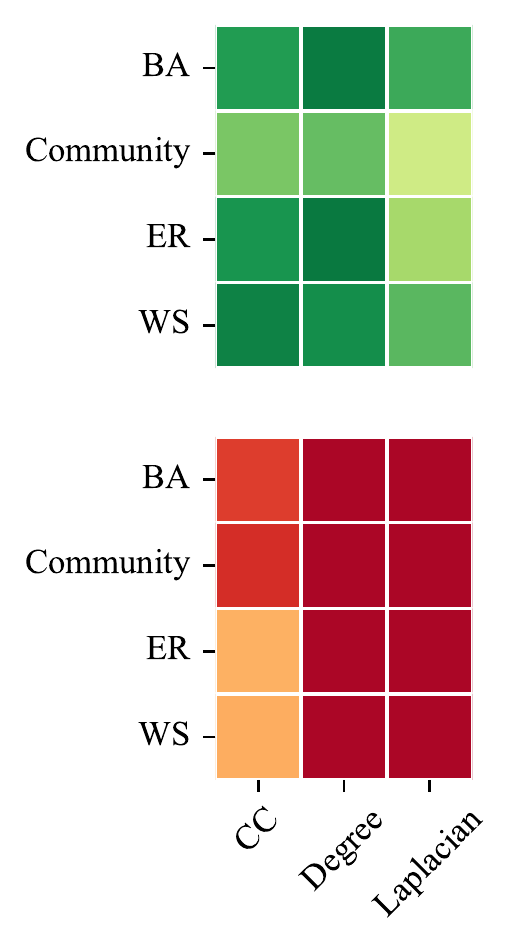}
        
        \caption{RemoveEdges}
    \end{subfigure}%
    \begin{subfigure}[b]{0.23\textwidth}
        \includegraphics[height=4cm]{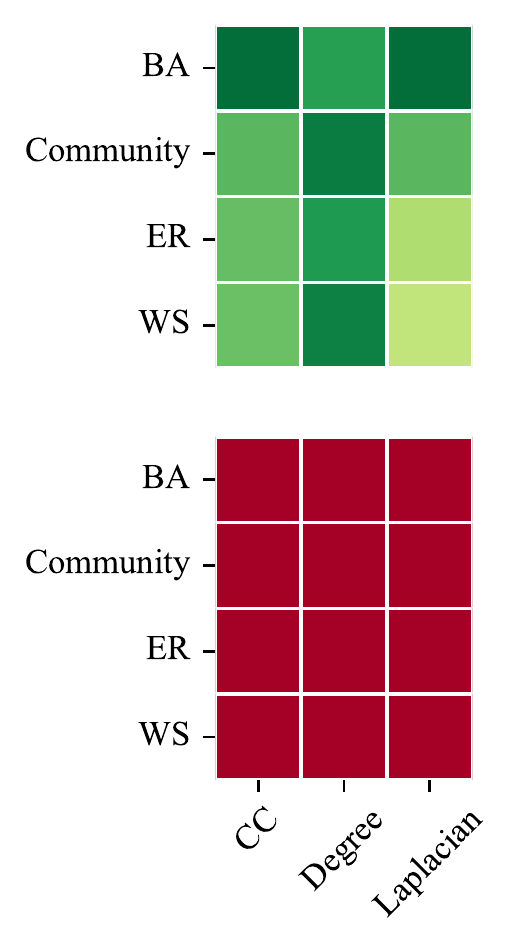}
        \caption{RewireEdges}
    \end{subfigure}
    \begin{subfigure}[b]{0.23\textwidth}
        \includegraphics[height=4cm]{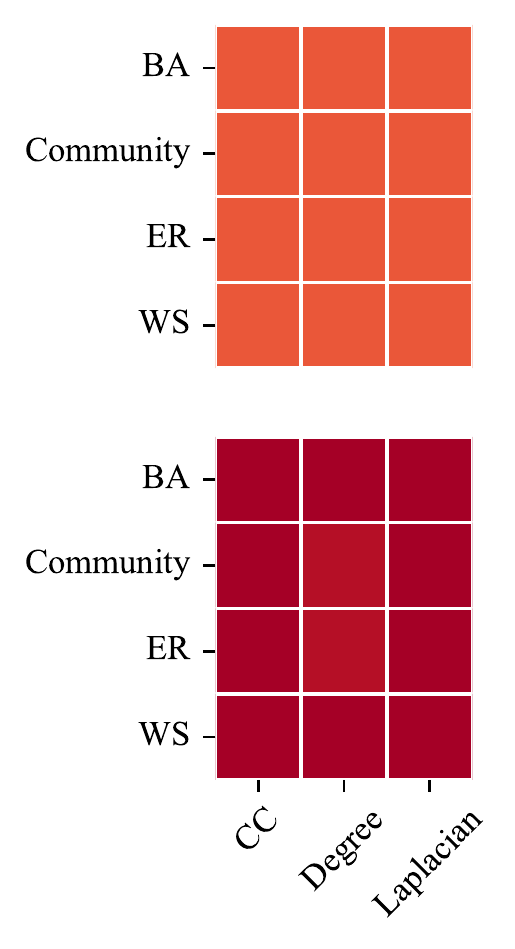}
        \caption{AddConnectedNodes}
    \end{subfigure}
    \begin{subfigure}[b]{0.03\textwidth}
        \includegraphics[height=4cm]{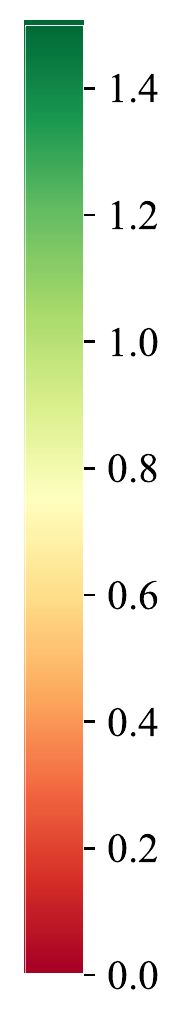}
        \caption*{}
    \end{subfigure}
    \caption{%
      \added{The mutual information of MMD with
      the degree of perturbation in the graph, assessed for different
      descriptor functions and datasets (BA: Barab\'{a}si-Albert Graphs, ER: Erd\"{o}s-R\'{e}nyi Graphs, WS: Watts-Strogatz Graphs). For an ideal metric, the
      distance would increase as the degree of perturbation increases;
      resulting in a high mutual information coefficient. The upper row shows the
      \emph{best} kernel-parameter combination in terms of the mutual information; the bottom row shows the \emph{worst}. As we can see, a proper kernel and parameter selection leads to strong dependence on the perturbation, but a bad choice  can lead to no mutual information, highlighting the importance of a good kernel/parameter combination. }
    }
    \label{fig:MI Perturbation Heatmap}
\end{figure}

\subsection{\added{Computational runtime of different kernels}\label{sec:kernel_runtime}}

\added{In the following, we assess the empirical CPU runtime of the linear kernel, RBF kernel, and the EMD-based kernel in the MMD calculation. As noted by ourselves and \citet{Liao19}, the EMD-based kernel is computationally expensive, hindering its suitability for use in graph generative model evaluation. We investigate this effect in more detail by doing a runtime comparison of the three kernels in a simulated environment. We generate two sets of Erd\"{o}s-R\'{e}nyi Graphs with the probability of an edge $p=0.3$, and then calculate the MMD distance between the two sets using the degree distribution as the descriptor function for a varying dataset size ($n_{\mathrm{graphs}}$), graph size ($n_\mathrm{nodes}$), and histogram bin size ($n_{\mathrm{bins}}$). As a default, we set $n_{\mathrm{graphs}}=100$, $n_\mathrm{nodes}=100$, and  $n_{\mathrm{bins}}=100$. We then change one variable at a time, iterating through values $\{100, 200, \ldots, 1000\}$ while keeping the other two variables fixed, and measured the time it took to calculate the MMD distance. We report the average runtime over ten repetitions to obtain more stable results.

Our results can be seen in Figure~\ref{fig:runtime}. At the default setting of 100 graphs, 100 nodes per graph, and 100 bins in the histogram, we observed that the EMD-based kernel took more than 50-140 times longer to compute compared to the RBF and linear kernels respectively. This difference was exacerbated when the size of the dataset ($n_{\mathrm{graphs}}$) increased as well as when the size of the graphs in the dataset ($n_{\textrm{nodes}})$ increased. For a dataset of 1,000 graphs, the EMD-based kernel took 27 minutes to run, whereas the linear kernel took 6 seconds and the RBF kernel took 21 seconds. Doing a simple extrapolation of the runtime of a dataset comprising 10,000 graphs, we estimate the EMD-based kernel would take 50 hours to run for a single hyperparameter combination, showcasing the computational limitation of this choice of kernel for larger dataset sizes. We observed a similar, albeit less extreme, increase in runtime when the size of the graphs increased.  While the linear time approximation of MMD \citep{Gretton12} could mitigate some of the runtime challenges, in general we recommend that practitioners use efficient kernels such as the linear kernel or RBF kernel for graph generative model evaluation.
}
\begin{figure}[tbp]
    \centering
    \includegraphics[width=\textwidth]{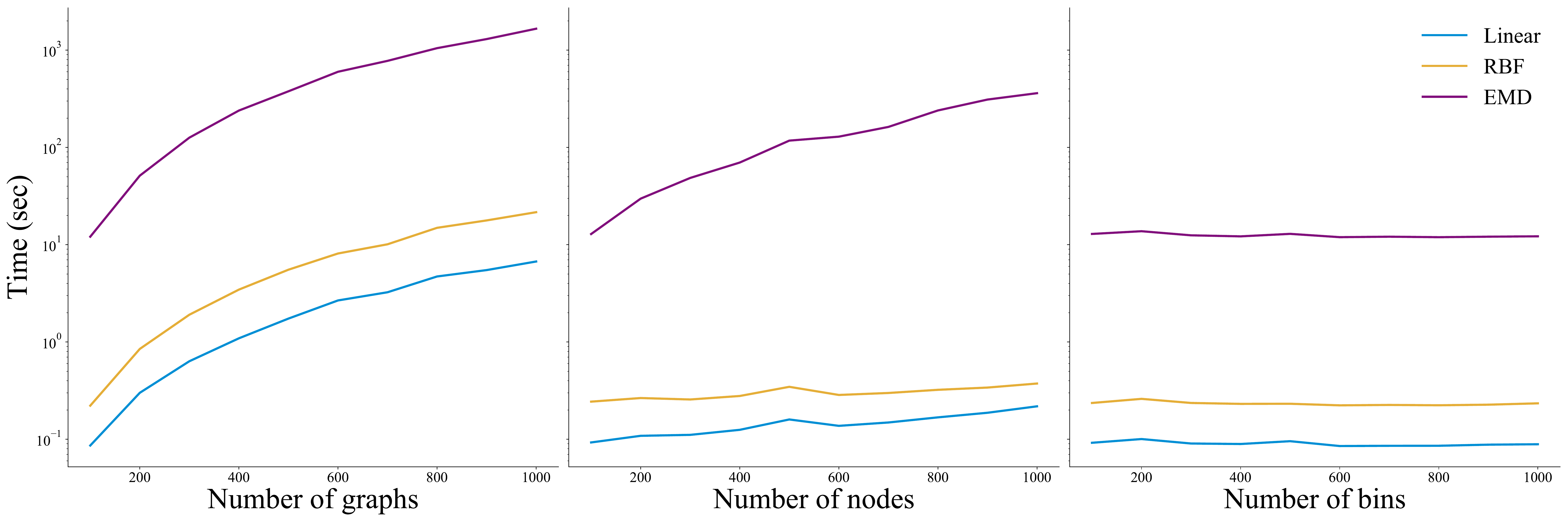}
    \caption{\added{CPU runtime comparison (lower is better) of the linear, RBF, and EMD-based kernels when evaluated on ER graphs and varying the number of graphs (the dataset size), the number of nodes (the size of each graph), and histogram bin size. Each plot varies a single parameter (on the $x$-axis), while keeping the other two fixed with values of 100. Runtimes are reported on a logarithmic scale.}}
    \label{fig:runtime}
\end{figure}

\end{document}